\providecommand{\lemmaname}{Lemma}
\providecommand{\theoremname}{Theorem}
\providecommand{\tabularnewline}{\\}
\providecommand{\algorithmname}{Algorithm}
\theoremstyle{plain}
\newtheorem{thm}{\protect\theoremname}
\theoremstyle{plain}
\newtheorem{lem}[thm]{\protect\lemmaname}
\providecommand{\lemmaname}{Lemma}
\providecommand{\theoremname}{Theorem}
\author{
    Hoang Phan$^{1}$ \hspace{0.09cm}
    Ngoc N. Tran$^{1}$ \hspace{0.09cm}
    Trung Le$^{2}$ \hspace{0.09cm} 
    Toan Tran$^{1}$ \hspace{0.09cm}
     Nhat Ho$^{3}$ \hspace{0.09cm}
    Dinh Phung$^{1,2}$ \vspace{3mm}\\
    $^{1}$ VinAI Research, Vietnam \\
    $^{2}$ Monash University, Australia \\
    $^{3}$ University of Texas, Austin
% 	\texttt{v.trungpq3@vinai.io, trunglm@monash.edu}, \\ \texttt{\{v.longvt8, v.toantm3, v.anhtt152, v.hungbh1, v.dinhpq2\}@vinai.io}
    % \thanks{Correspondence to Trung Phung: <\texttt{trungpq.hello@gmail.com}>.}
}
\def\eqref#1{equation~\ref{#1}}
\def\1{\bm{1}}
\DeclareMathAlphabet{\mathsfit}{\encodingdefault}{\sfdefault}{m}{sl}
\SetMathAlphabet{\mathsfit}{bold}{\encodingdefault}{\sfdefault}{bx}{n}
\DeclareMathOperator*{\argmin}{arg\,min}
\global\long\def\goto{\rightarrow}%
\global\long\def\and{\cap}%
\global\long\def\simplex{\Delta}%
\global\long\def\ess{\mathbb{E}}%
\newcommandx\ESS[2][usedefault, addprefix=\global, 1=]{\underset{#2}{\ess}\left[#1\right]}%
\global\long\def\trace{\mathrm{tr}}%
\global\long\def\argmin#1{\underset{_{#1}}{\text{argmin}\ } }%
\global\long\def\norm{}%
\global\long\def\norm#1{\left\Vert #1\right\Vert }%
\begin{document}
\title{Stochastic Multiple Target Sampling Gradient Descent}

\maketitle

\begin{abstract}
Sampling from an unnormalized target distribution is an essential problem
with many applications in probabilistic inference. Stein Variational
Gradient Descent (SVGD) has been shown to be a powerful method that iteratively updates a set of particles to approximate the distribution of interest.
Furthermore, when analysing its asymptotic properties, SVGD reduces exactly to a single-objective optimization problem and can be viewed as a probabilistic version of this single-objective optimization
problem. A natural question then arises: ``\emph{Can we derive a probabilistic version of the multi-objective optimization?}''. To answer
this question, we propose \emph{Stochastic Multiple Target Sampling Gradient Descent}
(MT-SGD), enabling us to sample from multiple unnormalized target
distributions. Specifically, our MT-SGD conducts a flow of intermediate
distributions gradually orienting to multiple target distributions,
which allows the sampled particles to move to the joint high-likelihood
region of the target distributions. Interestingly, the asymptotic
analysis shows that our approach reduces exactly to the multiple-gradient
descent algorithm for multi-objective optimization, as expected. Finally,
we conduct comprehensive experiments to demonstrate the merit of our
approach to multi-task learning.
\end{abstract}

\section{Introduction}

Sampling from an unnormalized target distribution that we know the
density function up to a scaling factor is a pivotal problem with
many applications in probabilistic inference \cite{bishop,murphy,MAL-001}.
For this purpose, Markov chain Monte Carlo (MCMC) has been widely
used to draw approximate posterior samples, but unfortunately, is often time-consuming and has difficulty accessing the convergence \cite{liu2016stein}.
Targeting an efficient acceleration of MCMC, some stochastic variational particle-based
approaches have been proposed, notably Stochastic Langevin Gradient
Descent \cite{welling2011bayesian} and Stein Variational Gradient
Descent (SVGD) \cite{liu2016stein}. Outstanding among them is SVGD,
with a solid theoretical guarantee of the convergence of the set of
particles to the target distribution by maintaining a flow of distributions. More specifically,
SVGD starts from an arbitrary and easy-to-sample initial distribution
and learns the subsequent distribution in the flow by push-forwarding the
current one using a function $T\left(x\right)=x+\epsilon\phi\left(x\right)$,
where $x\in\mathbb{R}^{d}$, $\epsilon>0$ is the learning rate, and
$\phi\in\mathcal{H}_{k}^{d}$ with $\mathcal{H}_{k}$ to be the Reproducing
Kernel Hilbert Space corresponding to a kernel $k$. It is
well-known that for the case of using Gaussian RBF kernel, by letting the kernel width approach $+\infty$,
the update formula of SVGD  at each step  asymptotically reduces to the typical gradient
descent (GD) \cite{liu2016stein}, showing the connection between a probabilistic
framework like SVGD and a single-objective optimization algorithm. In other
words, SVGD can be viewed as a probabilistic version of the GD for
single-objective optimization.

On the other side, multi-objective optimization (MOO) \cite{desideri2012multiple} aims to
optimize a set of objective functions and manifests itself in many real-world applications
problems, such as 
in multi-task learning (MTL) \cite{mahapatra2020multi, sener2018multi},
natural language processing \cite{anderson2021modest}, and reinforcement learning \cite{ghosh2013towards, pirotta2016inverse, parisi2014policy}. Leveraging the above insights, it is natural to ask: ``\emph{Can we derive a probabilistic version of multi-objective optimization?}''. By answering this question, we enable the application of the Bayesian inference framework to the tasks inherently fulfilled by the MOO framework.  
% {\color{blue} Why should we care about this question? Please give more motivation.}

\textbf{Contribution.} In this paper, we provide an affirmative answer to that question. In particular, we go beyond the SVGD to propose
\emph{Stochastic Multiple Target Sampling Gradient Descent} (MT-SGD), enabling us to sample
from multiple target distributions.  By considering the push-forward map $T\left(x\right)=x+\epsilon\phi\left(x\right)$
with $\phi\in\mathcal{H}_{k}^{d}$, we can find a closed-form for the
optimal push-forward map $T^{*}$ pushing the current distribution
on the flow simultaneously closer to all target distributions. Similar to SVGD, in the case of using Gaussian RBF kernel, when the kernel width approaches $+\infty$, MT-SGD reduces
exactly to the  multiple-gradient descent algorithm (MGDA) \cite{desideri2012multiple} for multi-objective optimization
(MOO). Our MT-SGD, therefore, can be considered as a probabilistic version of
the GD multi-objective optimization \cite{desideri2012multiple} as
expected.

Additionally, in practice, we consider a flow of {empirical} distributions, in which, each distribution is presented as a set of particles. Our observations indicate that MT-SGD globally drives
the particles to close to all target distributions, leading them
to diversify on the \emph{joint high-likelihood region} for all distributions.
It is worth noting that, different from other multi-particle approaches \cite{lin2019pareto,liu2021profiling,mahapatra2020multi}
leading the particles to diversify on a Pareto front, our MT-SGD orients
the particle to diversify on the so-called \emph{Pareto common} (i.e., the joint high-likelihood
region for all distributions) (cf. Section \ref{subsec:Com-to-MOO-SVGD}
for more discussions). We argue and empirically demonstrate that this characteristic is essential for the Bayesian setting, whose main goal is to estimate the \textit{ensemble accuracy} and the 
\textit{uncertainty calibration} of a model. In summary, we make the following contributions in this work:

\begin{itemize}
    \item Propose a principled framework that incorporates the power of Stein Variational Gradient Descent into multi-objective optimization. Concretely, our method is motivated by the theoretical analysis of SVGD, and we further derive the formulation that extends the original work and allows to sample from multiple unnormalize distributions.
    \item Demonstrate our algorithm is readily applicable in the context of multi-task learning. The benefits of MT-SGD are twofold: i) the trained network is optimal, which could not be improved in any task without diminishing another, and ii) there is no need for predefined preference vectors as in previous works \cite{lin2019pareto, mahapatra2020multi}, MT-SGD implicitly learns diverse models universally optimizing for all tasks.
    \item  Conduct comprehensive experiments to verify the behaviors of MT-SGD and demonstrate the superiority of MT-SGD to the baselines in a Bayesian setting, with higher ensemble performances and significantly lower calibration errors.
\end{itemize} 

\textbf{Related works.} The work of \cite{desideri2012multiple} proposed a multi-gradient
descent algorithm for multi-objective optimization (MOO) which opens
the door for the applications of MOO in machine learning and deep
learning. Inspired by \cite{desideri2012multiple}, MOO has been applied
in multi-task learning (MTL) \cite{mahapatra2020multi, sener2018multi},
few-shot learning \cite{chen2021pareto, ye2021multi}, and knowledge distillation
\cite{chennupati2021adaptive, du2020agree}. Specifically, in an earlier attempt at solving MTL, \cite{sener2018multi}
viewed multi-task learning as a multi-objective optimization problem,
where a task network consists of a shared feature extractor and a
task-specific predictor. In another study, \cite{mahapatra2020multi} developed
a  gradient-based multi-objective MTL algorithm to find a set of solutions
that satisfies the user preferences. Also follows the idea of learning neural networks conditioned on pre-defined preference
vectors, \cite{lin2019pareto}
proposed Pareto MTL, aiming to find a set of well-distributed Pareto solutions, which can represent different trade-offs among different tasks. Recently,
the work of \cite{liu2021profiling} leveraged MOO with SVGD \cite{liu2016stein} and Langevin dynamics \cite{welling2011bayesian} to diversify the solutions of MOO. In another line of work, \cite{ye2021multi} proposed a bi-level MOO that can be applied to few-shot learning. Furthermore, a somewhat different result was proposed, \cite{du2020agree} applied MOO to enable the knowledge distillation from multiple teachers and find a better optimization direction in  training the student network.

\textbf{Outline.} The paper is organized as follows. In Section \ref{sec:method}, we first present our theoretical contribution by reviewing the formalism and providing the point of view adopted to generalize SVGD in the context of MOO. Then, Section \ref{sec:mtl} introduces an algorithm to showcase the application of our proposed method in the multi-task learning scenario.  We report the results of extensive experimental studies performed
on various datasets that demonstrate the behaviors and efficiency of MT-SGD in Section \ref{sec:experiment}. Finally, we conclude the paper in Section \ref{sec:conclusion}. The complete proofs and experiment setups are deferred to the supplementary material.

% Furthermore, it expects that our MT-SGD globally moves the set of
% particles to the \emph{joint high-likelihood region} for all target
% distributions. Therefore, we do not claim our MT-SGD as method to
% diversify the solution on a Pareto front for user preferences as in
% \cite{mahapatra2020multi,liu2021profiling}. Alternatively, our MT-SGD
% can generate diverse particles on the so-called \emph{Pareto common}
% (i.e., the joint high-likelihood region for all target distributions).
% We argue and empirically demonstrate that by finding and diversifying
% the particles on Pareto common for the multiple posterior inference,
% our MT-SGD can outperform the baselines on Bayesian-inference metrics
% such as the ensemble accuracy and the calibration error.

% \textbf{Organization.} {\color{blue} Describe the organization of the paper here.}

\section{Multi-Target Sampling Gradient Descent \label{sec:method}}

We first briefly introduce the formulation of the multi-target sampling in Section \ref{subsec:problem_setting}. Second, Section \ref{subsec:theory} presents our theoretical development and shows how our proposed method is applicable to this problem. Finally, we detail how to train the proposed method in Section \ref{subsec:mt-sgd_algo} and highlight key differences between our method and related work in Section \ref{subsec:Com-to-MOO-SVGD}.

\subsection{Problem Setting}
\label{subsec:problem_setting}
Given a set of target distributions $p_{1:K}\left(\theta\right):=\{p_{1}(\theta),\ldots,p_{K}(\theta)\}$
with parameter $\theta\in\mathbb{R}^{d}$, we aim to find the optimal
distribution $q^{*}\in\mathcal{Q}$ that minimizes a vector-valued objective function whose $k$-th component is $D_{KL}(q\Vert p_{k})$:
\begin{equation}
\min_{q\in\mathcal{Q}}\left[D_{KL}\left(q\Vert p_{1}\right),...,D_{KL}\left(q\Vert p_{K}\right)\right],\label{eq:mul_dis_op}
\end{equation}
where $D_{KL}$ represents Kullback-Leibler divergence and $\mathcal{Q}$
is a family of distributions. 

When there exists an objective function that conflicts with each other, there will be a trade-off between these two objectives. Therefore, no “optimal solution” exists in such cases. Alternatively, we are often interested in seeking a set of solutions such that each does not have any better solution (i.e. achieves lower loss values in all objectives) \cite{lin2019pareto, liu2021profiling, sener2018multi}.  The optimization problem (OP) in (\ref{eq:mul_dis_op}) thus can be viewed as a multi-objective OP \cite{desideri2012multiple} on the probability distribution space. Let us denote $\mathcal{H}_{k}$ by the Reproducing Kernel Hilbert Space (RKHS) associated with a positive semi-definite (p.s.d.) kernel $k$, and $\mathcal{H}_{k}^{d}$ by the $d$-dimensional
vector function:
\begin{equation*}
f=[f_{1},\ldots,f_{d}], (f_{i}\in\mathcal{H}_{k}).
\end{equation*}
 Inspired by \cite{liu2016stein}, we construct a flow of distributions
$q_{0},q_{1},...,q_{L}$ departed from a simple distribution $q_{0}$,
that gradually move closer to all the target distributions. In
particular, at each step, assume that $q$ is the current obtained
distribution, and the goal is to learn a transformation $T=id+\epsilon\phi$
so that the \emph{feed-forward distribution} $q^{[T]}=T\#q$ moves
closer to $p_{1:K}$ simultaneously. Here we use $id$ to denote the
identity operator, $\epsilon>0$ is a step size, and $\phi\in\mathcal{H}_{k}^{d}$
is a velocity field. Particularly, the problem of finding the optimal
transformation $T$ {for the current step} is formulated as:
\begin{equation}
\min_{\phi}\left[D_{KL}\left(q^{[T]}\Vert p_{1}\right),...,D_{KL}\left(q^{[T]}\Vert p_{K}\right)\right].\label{eq:T_op}
\end{equation}
% \vspace{}
\subsection{Our Theoretical Development \label{subsec:theory}}

It is worth noting that the transformation $T$ defined above is injective
when $\epsilon$ is sufficiently small \cite{liu2016stein}. We consider
each $D_{KL}\left(q^{[T]}\Vert p_{i}\right),i=1,...,K$ as a function
w.r.t. $\epsilon$, by applying the first-order Taylor expansion at $0$, we have:
\[
D_{KL}\left(q^{[T]}\Vert p_{i}\right)=D_{KL}\left(q\Vert p_{i}\right)+\nabla_{\epsilon}D_{KL}\left(q^{[T]}\Vert p_{i}\right)\Big|_{\epsilon=0}\epsilon+O\left(\epsilon^{2}\right),
\]

where {$\lim_{\epsilon\goto0}O\left(\epsilon^2\right)/\epsilon^{2}=const$. }

Given that the velocity field $\phi\in\mathcal{H}_{k}^{d}$, similar to \cite{liu2016stein}, the gradient $\nabla_{\epsilon}D_{KL}\left(q^{[T]}\Vert p_{i}\right)\mid_{\epsilon=0}$
can be calculated as provided in \cref{footnote:supp} 
\[
\nabla_{\epsilon}D_{KL}\left(q^{[T]}\Vert p_{i}\right)\Big|_{\epsilon=0}=-\left\langle \phi,\psi_{i}\right\rangle _{\mathcal{H}_{k}^{d}},
\]
where $\psi_{i}\left(\cdot\right)=\mathbb{E}_{\theta\sim q}\left[k\left(\theta,\cdot\right)\nabla_{\theta}\log p_{i}\left(\theta\right)+\nabla_{\theta}k\left(\theta.\cdot\right)\right]$
and $\left\langle \cdot,\cdot\right\rangle _{\mathcal{H}_{k}^{d}}$
is the dot product in the RKHS.

% \begin{wrapfigure}[17]{r}{0.6\columnwidth}
% \vspace*{-1.0\intextsep}
% \centering
% \noindent\includegraphics[width=0.57\textwidth,trim=.75cm .5cm 1.5cm .7cm,clip]{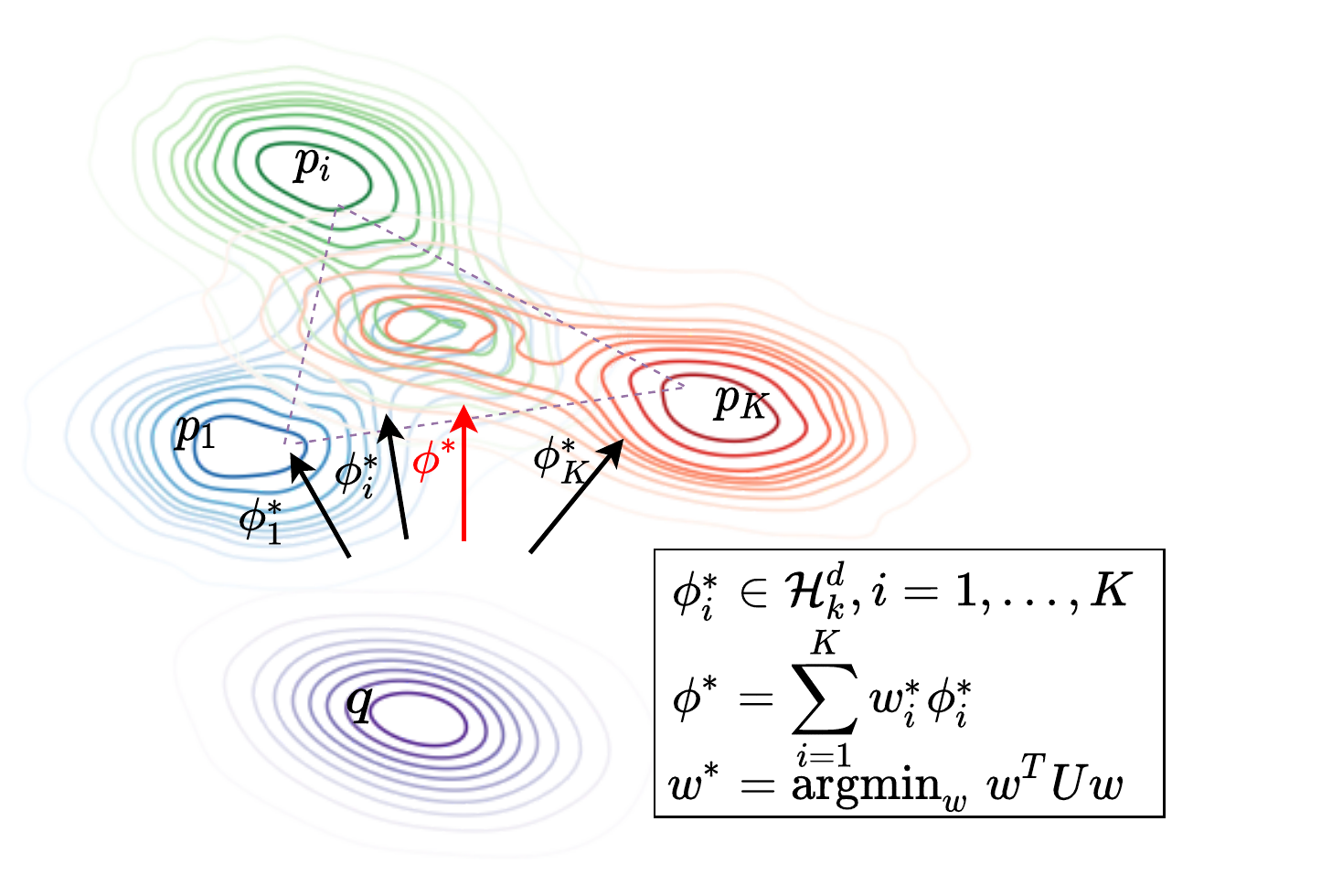}
% \centering

% \caption{How to find the optimal descent direction $\phi^{*}$.\label{fig:find_common_phi}}

% \end{wrapfigure}%

This means that, for each target distribution $p_{i}$, the steepest
descent direction is $\phi_{i}^{*}=\psi_{i}$, in which the KL divergence
of interest $D_{KL}\left(q^{[T]}\Vert p_{i}\right)$ gets decreased
roughly by $-\epsilon\norm{\phi_{i}^{*}}_{\mathcal{H}_{k}^{d}}^{2}$
toward the target distribution $p_{i}$. However, this only guarantees
a divergence reduction for a single target distribution $p_{i}$ itself.
Our next aim is hence to find a common direction $\phi^{*}$ to reduce
the KL divergences w.r.t. all target distributions, which is reflected
in the following lemma, showing us how to combine the individual steepest
descent direction $\phi_{i}^{*}=\psi_{i}$ to yield the optimal direction
$\phi^{*}$ as summarized in Figure \ref{fig:find_common_phi}. 
\begin{figure}[!ht]
\begin{centering}
  \includegraphics[width=0.55\textwidth,trim=.75cm .5cm 1.5cm .7cm,clip]{}
\captionof{figure}{How to find the optimal descent direction $\phi^{*}$.\label{fig:find_common_phi}}  
\end{centering}
\end{figure}

\begin{lem}
\label{lem:common_direction}Let $w^{*}$ be the optimal solution of the optimization problem  $w^{*}=\argmin{w\in\simplex_{K}}w^{T}Uw$
and $\phi^{*}=\sum_{i=1}^{K}w_{i}^{*}\phi_{i}^{*}$, where $\simplex_{K}=\left\{ \pi\in\mathbb{R}_{+}^{K}:\norm{\pi}_{1}=1\right\} $
and $U\in\mathbb{R}^{K\times K}$ with $U_{ij}=\left\langle \phi_{i}^{*},\phi_{j}^{*}\right\rangle _{\mathcal{H}_{k}^{d}}$,
then we have
\[
\left\langle \phi^{*},\phi_{i}^{*}\right\rangle _{\mathcal{H}_{k}^{d}}\geq\norm{\phi^{*}}_{\mathcal{H}_{k}^{d}}^{2},i=1,...,K.
\]
\end{lem}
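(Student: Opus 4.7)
The plan is to exploit the variational characterization of $w^{*}$ as the minimizer of a convex quadratic over the simplex, and then translate the first-order optimality inequality back into an inner-product statement.

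First I would observe that the matrix $U$ is the Gram matrix of the vectors $\phi_{1}^{*},\ldots,\phi_{K}^{*}$ in $\mathcal{H}_{k}^{d}$, hence symmetric and positive semi-definite. Therefore the objective $g(w):=w^{\top}Uw$ is convex, and moreover $g(w)=\bigl\Vert \sum_{i}w_{i}\phi_{i}^{*}\bigr\Vert_{\mathcal{H}_{k}^{d}}^{2}$, so in particular $\Vert\phi^{*}\Vert_{\mathcal{H}_{k}^{d}}^{2}=g(w^{*})=(w^{*})^{\top}Uw^{*}$. Similarly the key quantity on the left-hand side can be rewritten as $\langle\phi^{*},\phi_{i}^{*}\rangle_{\mathcal{H}_{k}^{d}}=\sum_{j}w_{j}^{*}\langle\phi_{j}^{*},\phi_{i}^{*}\rangle_{\mathcal{H}_{k}^{d}}=(Uw^{*})_{i}$. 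So the claim reduces to showing $(Uw^{*})_{i}\ge(w^{*})^{\top}Uw^{*}$ for every $i$.

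Next I would invoke the first-order optimality condition for the convex problem $\min_{w\in\Delta_{K}}g(w)$. Since $\Delta_{K}$ is convex and closed, optimality of $w^{*}$ is equivalent to the variational inequality
\[
\nabla g(w^{*})^{\top}(v-w^{*})\ge 0\qquad\text{for all } v\in\Delta_{K},
\]
i.e.\ $2(Uw^{*})^{\top}(v-w^{*})\ge 0$, which gives $(Uw^{*})^{\top}v\ge(Uw^{*})^{\top}w^{*}=\Vert\phi^{*}\Vert_{\mathcal{H}_{k}^{d}}^{2}$. Specializing $v=e_{i}$, the $i$-th vertex of $\Delta_{K}$, immediately yields $(Uw^{*})_{i}\ge\Vert\phi^{*}\Vert_{\mathcal{H}_{k}^{d}}^{2}$, which by the identification above is exactly the desired inequality $\langle\phi^{*},\phi_{i}^{*}\rangle_{\mathcal{H}_{k}^{d}}\ge\Vert\phi^{*}\Vert_{\mathcal{H}_{k}^{d}}^{2}$.

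I expect no essential obstacle: the main idea is just recognizing the inequality as a first-order optimality condition tested on vertices. The only place where care is needed is justifying the variational inequality (which uses convexity of $g$ and differentiability on an open neighborhood of $\Delta_{K}$) and checking that the convention $\Delta_{K}=\{\pi\in\mathbb{R}_{+}^{K}:\Vert\pi\Vert_{1}=1\}$ really does contain the vertices $e_{i}$, both of which are immediate. An alternative, slightly more hands-on route would be through KKT: introducing multipliers for the equality $\sum_{i}w_{i}=1$ and the inequalities $w_{i}\ge 0$, stationarity gives $(Uw^{*})_{i}=-\lambda/2+\mu_{i}/2$ with $\mu_{i}\ge 0$ and $\mu_{i}w_{i}^{*}=0$, whence summing against $w_{i}^{*}$ shows $\Vert\phi^{*}\Vert^{2}=-\lambda/2$, and then $(Uw^{*})_{i}\ge-\lambda/2=\Vert\phi^{*}\Vert^{2}$. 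Both routes deliver the same conclusion; I would present the variational-inequality one for brevity.
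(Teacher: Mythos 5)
Your proof is correct and takes essentially the same route as the paper: the paper derives the variational inequality $(Uw^{*})^{\top}(u-w^{*})\ge 0$ explicitly by expanding $g(\epsilon u+(1-\epsilon)w^{*})$ and letting $\epsilon\to 0$, then tests it at the vertices $e_{i}$, exactly as you do by citing the first-order optimality condition directly.
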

% {\color{blue} Need to show where to find the proof in the Appendix.}  
Lemma \ref{lem:common_direction} provides a common descent direction
$\phi^{*}$ so that all KL divergences w.r.t. the target distributions
are consistently reduced by $\epsilon\norm{\phi^{*}}_{\mathcal{H}_{k}^{d}}^{2}$
roughly and Theorem \ref{thm:KL_reduce} confirms this argument.
\begin{thm}
\label{thm:KL_reduce}If there does not exist $w\in\simplex_{K}$
such that $\sum_{i=1}^{K}w_{i}\phi_{i}^{*}=0$, given a sufficiently
small step size $\epsilon$, all KL divergences w.r.t. the target
distributions are strictly decreased by at least $A\norm{\phi^{*}}_{\mathcal{H}_{k}^{d}}^{2}>0$
where $A$ is a positive constant.
\end{thm}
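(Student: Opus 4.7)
My plan is to apply the first-order Taylor expansion stated in the excerpt with the choice $\phi=\phi^{*}$, then invoke Lemma \ref{lem:common_direction} to turn the inner products $\langle \phi^{*},\phi_{i}^{*}\rangle_{\mathcal{H}_{k}^{d}}$ into a uniform lower bound involving $\|\phi^{*}\|_{\mathcal{H}_{k}^{d}}^{2}$, and finally choose $\epsilon$ small enough so that the quadratic remainder cannot cancel out the linear gain.

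First, I would observe that the hypothesis rules out $\phi^{*}=0$. Indeed, since $w^{*}\in\Delta_{K}$ and $\phi^{*}=\sum_{i=1}^{K}w_{i}^{*}\phi_{i}^{*}$, any zero $\phi^{*}$ would furnish a witness $w\in\Delta_{K}$ with $\sum_{i}w_{i}\phi_{i}^{*}=0$, contradicting the assumption. Hence $\|\phi^{*}\|_{\mathcal{H}_{k}^{d}}^{2}>0$, which is needed for the claimed strict decrease. Next, plugging $\phi=\phi^{*}$ into the Taylor expansion and the identity $\nabla_{\epsilon}D_{KL}(q^{[T]}\|p_{i})|_{\epsilon=0}=-\langle\phi,\psi_{i}\rangle_{\mathcal{H}_{k}^{d}}=-\langle\phi,\phi_{i}^{*}\rangle_{\mathcal{H}_{k}^{d}}$ already derived, I get
\[
D_{KL}(q^{[T^{*}]}\|p_{i})=D_{KL}(q\|p_{i})-\epsilon\,\langle\phi^{*},\phi_{i}^{*}\rangle_{\mathcal{H}_{k}^{d}}+R_{i}(\epsilon),
\]
where $R_{i}(\epsilon)=O(\epsilon^{2})$. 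Lemma \ref{lem:common_direction} then yields the key inequality $\langle\phi^{*},\phi_{i}^{*}\rangle_{\mathcal{H}_{k}^{d}}\geq\|\phi^{*}\|_{\mathcal{H}_{k}^{d}}^{2}$ for every $i$, so the drop in each KL divergence is at least $\epsilon\|\phi^{*}\|_{\mathcal{H}_{k}^{d}}^{2}-R_{i}(\epsilon)$.

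To finish, I need to absorb $R_{i}(\epsilon)$. Because $K$ is finite, I can take a uniform bound $|R_{i}(\epsilon)|\leq M\epsilon^{2}$ on some interval $(0,\epsilon_{0}]$ with $M=\max_{i}M_{i}$, where $M_{i}$ depends on $\phi^{*}$ and on second-order information from the Taylor remainder. Then for any $\epsilon\in(0,\min(\epsilon_{0},\|\phi^{*}\|_{\mathcal{H}_{k}^{d}}^{2}/(2M)))$, the net decrease is at least $\epsilon\|\phi^{*}\|_{\mathcal{H}_{k}^{d}}^{2}-M\epsilon^{2}\geq\tfrac{\epsilon}{2}\|\phi^{*}\|_{\mathcal{H}_{k}^{d}}^{2}$, so the theorem holds with $A=\epsilon/2>0$ (or any analogous positive constant).

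The main obstacle I expect is the uniform control of the remainder $R_{i}(\epsilon)$: the little-$o$ notation in the expansion hides constants that in principle depend on $\phi^{*}$, on the target density $p_{i}$, and on derivatives of the kernel. One needs to argue (or cite the SVGD analysis already referenced in the paper) that these constants are finite and can be taken uniform over the finite family $i=1,\dots,K$; everything else is essentially a one-line algebraic manipulation once Lemma \ref{lem:common_direction} is in hand.
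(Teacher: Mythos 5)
Your proposal is correct and follows essentially the same route as the paper's own proof: Taylor expansion at $\epsilon=0$ with $\phi=\phi^{*}$, the bound $\left\langle \phi^{*},\phi_{i}^{*}\right\rangle _{\mathcal{H}_{k}^{d}}\geq\norm{\phi^{*}}_{\mathcal{H}_{k}^{d}}^{2}$ from Lemma \ref{lem:common_direction}, a uniform bound on the finitely many quadratic remainders, and a sufficiently small $\epsilon$. Your explicit remark that the hypothesis forces $\phi^{*}\neq0$ (hence $\norm{\phi^{*}}_{\mathcal{H}_{k}^{d}}^{2}>0$) and your explicit choice $A=\epsilon/2$ are small touches of added care that the paper leaves implicit.
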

% {\color{blue} Need to show where to find the proof in the Appendix.} 
The next arising question is how to evaluate the matrix $U\in\mathbb{R}^{K\times K}$
with $U_{ij}=\left\langle \phi_{i}^{*},\phi_{j}^{*}\right\rangle _{\mathcal{H}_{k}^{d}}$
for solving the quadratic problem: $\min_{w\in\simplex_{K}}w^{T}Uw$.
To this end, using some well-known equalities in the RKHS\footnote{\label{footnote:supp}All proofs and derivations can be found in the supplementary material.}, we arrive at the following formula: 
\begin{align}
U_{ij} & =\left\langle \phi_{i}^{*},\phi_{j}^{*}\right\rangle _{\mathcal{H}_{k}^{d}}=\mathbb{E}_{\theta,\theta'\sim q}\Biggl[k\left(\theta,\theta'\right)\left\langle \nabla\log p_{i}(\theta),\nabla\log p_{j}(\theta')\right\rangle \nonumber \\
 & +\left\langle \nabla\log p_{i}(\theta),\frac{\partial k(\theta,\theta')}{\partial\theta'}\right\rangle +\left\langle \nabla\log p_{j}(\theta'),\frac{\partial k(\theta,\theta')}{\partial\theta}\right\rangle +\trace\left(\frac{\partial^{2}k(\theta,\theta')}{\partial\theta\partial\theta'}\right)\Biggr],
 \label{eq:Uij}
\end{align}
where $\trace(\cdot)$ denotes the trace of a (square) matrix.

\subsection{Algorithm for MT-SGD \label{subsec:mt-sgd_algo}}

For the implementation of MT-SGD, we consider $q$ as a discrete distribution
over a set of $M$, ($M \in \mathbb N^{*}$) particles $\theta_1, \theta_2, \dots,\theta_M\sim q$. The formulation to evaluate
$U_{ij}$ in Equation. (\ref{eq:Uij}) becomes:
\begin{align}
U_{ij} & =\frac{1}{M^{2}}\sum_{a=1}^{M}\sum_{b=1}^{M}\Biggl[k\left(\theta_{a},\theta_{b}\right)\left\langle \nabla\log p_{i}(\theta_{a}),\nabla\log p_{j}(\theta_{b})\right\rangle +\left\langle \nabla\log p_{i}(\theta_{a}),\frac{\partial k(\theta_{a},\theta_{b})}{\partial\theta_{b}}\right\rangle \nonumber \\
 & +\left\langle \nabla\log p_{j}(\theta_{b}),\frac{\partial k(\theta_{a},\theta_{b})}{\partial\theta_{a}}\right\rangle +\trace\left(\frac{\partial^{2}k(\theta_{a},\theta_{b})}{\partial\theta_{a}\partial\theta_{b}}\right)\Biggr].\label{eq:Uij-1}
\end{align}

The optimal solution $\phi_{i}^{*}$ then can be computed as:
\begin{equation}
\phi_{i}^{*}\left(\cdot\right)=\frac{1}{M}\sum_{j=1}^{M}\left[k\left(\theta_{j},\cdot\right)\nabla_{\theta_{j}}\log p_{i}\left(\theta_{j}\right)+\nabla_{\theta_{j}}k\left(\theta_{j},\cdot\right)\right].\label{eq:opt_phi_dis}
\end{equation}
The key steps of our MT-SGD are summarized in Algorithm \ref{alg:alg},
where the set of particles $\theta_{1:M}$ is updated gradually to
approach the multiple distributions $p_{1:K}$. Furthermore, the update
formula consists of two terms: (i) the first term (i.e., relevant
to $k\left(\theta_{j},\cdot\right)\nabla_{\theta_{j}}\log p_{i}\left(\theta_{j}\right)$)
helps to push the particles to the \emph{joint high-likelihood region}
for all distributions and (ii) the second term (i.e., relevant to
$\nabla_{\theta_{j}}k\left(\theta_{j},\cdot\right)$) which is a \emph{repulsive
term} to push away the particles when they reach out each other. Finally,
we note that our proposed MT-SGD can be applied in the context where
we know the target distributions up to a scaling factor (e.g., in
the posterior inference).
\begin{algorithm}[H]
\begin{algorithmic}[1]
\REQUIRE Multiple unnormalized target densities $p_{1:K}$.
\ENSURE The optimal particles $\theta_1, \theta_2, \dots, \theta_M$.
\STATE  Initialize a set of particles $\theta_1, \theta_2, \dots, \theta_M\sim q_{0}$ .
\FOR {$t=1$ to $L$}
\STATE Form the matrix $U\in\mathbb{R}^{K\times K}$ with the element
$U_{ij}$ computed as in Equation. (\ref{eq:Uij-1}).
\STATE Solve the QP $\min_{w\in\simplex_{K}}w^{T}Uw$ to find the
optimal weights $w^{*}\in\simplex_{K}${\small{}.}{\small\par}
\STATE Compute the optimal direction $\phi^{*}\left(\cdot\right)=\sum_{i=1}^{K}w_{i}^{*}\phi_{i}^{*}\left(\cdot\right)$,
where $\phi_{i}^{*}$ is defined in Equation. (\ref{eq:opt_phi_dis}).
\STATE Update $\theta_{i}=\theta_{i}+\epsilon\phi^{*}\left(\theta_{i}\right),i=1,...,K$.
\ENDFOR
\STATE \textbf{return} $\theta_1, \theta_2, \dots, \theta_M$.
\end{algorithmic}
\caption{Pseudocode for MT-SGD.\label{alg:alg}}
\end{algorithm}

\textbf{Analysis for the case of RBF kernel. } We now consider a radial basis-function (RBF) kernel of bandwidth $\sigma$: $k\left(\theta,\theta'\right)=\exp\left\{ -\norm{\theta-\theta'}^{2}/\left(2\sigma^{2}\right)\right\} $ and examine some asymptotic behaviors. 

\textbf{$\blacktriangleright$~General case:}
The elements of the matrix $U$ become
\begin{align*}
U_{ij} & =\mathbb{E}_{\theta,\theta'\sim q}\Biggl[\exp\left\{ \frac{-\norm{\theta-\theta'}^{2}}{2\sigma^{2}}\right\} \Biggl[\left\langle \nabla\log p_{i}(\theta),\nabla\log p_{j}(\theta')\right\rangle \\
 & +\frac{1}{\sigma^{2}}\left\langle \nabla\log p_{i}(\theta)-\nabla\log p_{j}(\theta'),\theta-\theta'\right\rangle -\frac{d}{\sigma^{2}}-\frac{\norm{\theta-\theta'}^{2}}{\sigma^{4}}\Biggr]\Biggr].
\end{align*}

\textbf{$\blacktriangleright$\textmd{~}Single particle distribution $q=\delta_{\theta}$:} The elements of the matrix $U$ become
$$
U_{ij}=\left\langle \nabla\log p_{i}(\theta),\nabla\log p_{j}(\theta)\right\rangle ,
$$
and our formulation reduces exactly to MOO in \cite{desideri2012multiple}.

\textbf{$\blacktriangleright$~When $\sigma\protect\goto\infty$:}
The elements of the matrix $U$ become
\begin{align*}
U_{ij} & =\mathbb{E}_{\theta,\theta'\sim q}\Biggl[\left\langle \nabla\log p_{i}(\theta),\nabla\log p_{j}(\theta')\right\rangle \Biggr].
\end{align*}

\subsection{Comparison to MOO-SVGD and Other Works\label{subsec:Com-to-MOO-SVGD}}
The most closely related work to ours is MOO-SVGD \cite{liu2021profiling}. In
a nutshell, ours is principally different from that work and we 
show a fundamental difference between our MT-SGD and MOO-SVGD in Figure \ref{fig:Ours_Liu}.
Our MT-SGD navigates the particles from one distribution to another
distribution consecutively with a theoretical guarantee of globally
getting closely to multiple target distributions. By contrast, while MOO-SVGD
also uses the MOO \cite{desideri2012multiple} to update the particles
, their employed repulsive term encourages the particle diversity without any theoretical-guaranteed principle to control the repulsive term, hence it can force the particles to scatter on the multiple distributions. In fact, they aim to profile the whole Pareto front, which is preferred when users want to obtain a collection of diverse Pareto optimal solutions with different trade-offs among all tasks. 

% Illustration of the major difference in term of principle between MOO-SVGD and MT-SGD:

Furthermore, it expects that our MT-SGD globally moves the set of
particles to the \emph{joint high-likelihood region} for all target
distributions. Therefore, we do not claim our MT-SGD as a method to
diversify the solution on a Pareto front for user preferences, as in
\cite{liu2021profiling, mahapatra2020multi}. Alternatively, our MT-SGD
can generate diverse particles on the so-called \emph{Pareto common}
(i.e., the joint high-likelihood region for all target distributions).
We argue and empirically demonstrate that by finding and diversifying
the particles on Pareto common for the multiple posterior inferences,
our MT-SGD can outperform the baselines on Bayesian-inference metrics
such as the ensemble accuracy and the calibration error.

\begin{figure}[!ht]
\begin{centering}
\includegraphics[width=1\textwidth]{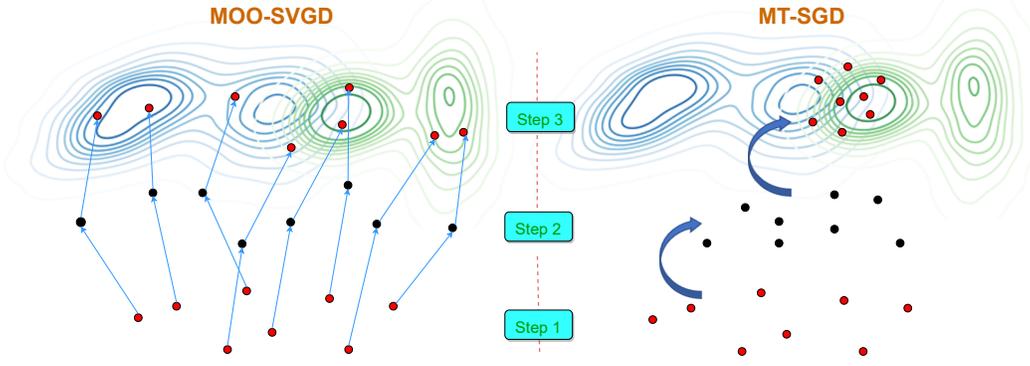}
\par\end{centering}
\caption{ Our MT-SGD moves the particles from one distribution to another distribution
to globally get closer to two target distributions (i.e., the blue
and green ones). Differently, MOO-SVGD uses MOO \cite{desideri2012multiple}
to move the particles individually and independently. The diversity
is enforced by the repulsive forces among particles. There is no principle
to control these repulsive forces, hence they can push the particles
scattering on two distributions. \label{fig:Ours_Liu}}
\end{figure}

Moreover, MOO-SVGD is \emph{not computationally efficient} when the number of particles is high because it requires solving an independent quadratic programming problem for each particle (cf. Section \ref{subsec:Sam_Mul_Syn}
and Figure \ref{fig:Sam_Mul_Syn} for the experiment on a synthetic dataset). Instead, our work solves a single  quadratic programming problem for all particles, then update them accordingly. We verify this computational improvement and present the training time of all baselines in the supplement material.

\section{Application to Multi-Task Learning \label{sec:mtl}}

For multi-task learning, we assume to have $K$ tasks $\left\{ \mathcal{T}_{i}\right\} _{i=1}^{K}$
and a training set $\mathbb{D}=\left\{ \left(x_{i},y_{i1},...,y_{iK}\right)\right\} _{i=1}^{N}$,
where $x_{i}$ is a data example and $y_{i1},...,y_{iK}$ are the
labels for the tasks. The model for each task $\theta^{j}=\left[\alpha,\beta^{j}\right],j=1,...,K$
consists of the \emph{shared part} $\alpha$ and \emph{non-shared}
part $\beta^{j}$ targeting the task $j$. The posterior $p\left(\theta^{j}\mid\mathbb{D}\right)$
for each task reads
{\begin{align*}
p\left(\theta^{j}\mid\mathbb{D}\right)&\propto p\left(\mathbb{D}\mid\theta^{j}\right)p\left(\theta^{j}\right)\propto \prod_{i=1}^{N}p\left(y_{ij}\mid x_{i},\theta^{j}\right)\\
&\propto\prod_{i=1}^{N}\exp\left\{ -\ell\left(y_{ij},x_{i};\theta^{j}\right)\right\} =\exp\left\{ -\sum_{i=1}^{N}\ell\left(y_{ij},x_{i};\theta^{j}\right)\right\},
\end{align*}
where $\ell$ is a loss function and the predictive likelihood $p(y_{ij}\mid x_i, \theta^j) \propto \exp\left\{ -\ell\left(y_{ij},x_{i};\theta^{j}\right)\right\}$ is examined. Note that the prior $p\left(\theta^{j}\right)$ here is retained from previous studies \cite{lin2019pareto,liu2021profiling}, which is a uniform and non-informative prior and can be treated as a constant term in our formulation.}

For our approach, we maintain a set of models $\theta_{m}=\left[\theta_{m}^{j}\right]_{j=1}^{K}$
with $m=1,...,M$, where $\theta_{m}^{j}=\left[\alpha_{m},\beta_{m}^{j}\right]$.
At each iteration, given the non-shared parts $\left[\beta^{j}\right]_{j=1}^{K}$
with $\beta^{j}=\left[\beta_{m}^{j}\right]_{m=1}^{M}$, we sample
the shared parts from the multiple distributions $p\left(\alpha\mid\beta^{j},\mathbb{D}\right),j=1,...,K$ as
\begin{equation}
\alpha_{m}\sim p\left(\alpha\mid\beta^{j},\mathbb{D}\right)\propto p\left(\alpha,\beta^{j}\mid\mathbb{D}\right)\propto p\left(\theta^{j}\mid\mathbb{D}\right)\propto\exp\left\{ -\sum_{i=1}^{N}\ell\left(y_{ij},x_{i};\theta^{j}\right)\right\} .\label{eq:shared_pos-1}
\end{equation}

We now apply our proposed MT-SGD to sample the shared parts $\left[\alpha_{m}\right]_{m=1}^{M}$
from the multiple distributions defined in (\ref{eq:shared_pos-1})
as
\begin{equation}
\alpha_{m}=\alpha_{m}+\epsilon\sum_{j=1}^{K}w_{j}^{*}\phi_{j}^{*}\left(\alpha_{m}\right),\label{eq:update_share}
\end{equation}
where $\phi_{j}^{*}\left(\alpha_{m}\right)=\frac{1}{M}\sum_{t=1}^{M}\left[k_{1}\left(\alpha_{t},\alpha_{m}\right)\nabla_{\alpha_{t}}\log p\left(\alpha_{t}\mid\beta^{j}_t,\mathbb{D}\right)+\nabla_{\alpha_{t}}k_{1}\left(\alpha_{t},\alpha_{m}\right)\right]$
and {$w^{*}=\left[w_{k}^{*}\right]_{k=1}^{K}$} are the weights received
from solving the quadratic programming problem. Here we note that
$\nabla_{\alpha_{t}}\log p\left(\alpha_{t}\mid\beta^{j}_t,\mathbb{D}\right)$
can be estimated via the batch gradient of the loss using Equation (\ref{eq:shared_pos-1}).

Given the updated shared parts $\left[\alpha_{m}\right]_{m=1}^{M}$,
for each task $j$, we update the corresponding non-shared parts $\left[\beta_{m}^{j}\right]_{m=1}^{M}$
by sampling
\begin{equation}
\beta_{m}^{j}\sim p\left(\beta^{j}\mid\alpha,\mathbb{D}\right)\propto p\left(\beta^{j},\alpha\mid\mathbb{D}\right)\propto p\left(\theta^{j}\mid\mathbb{D}\right)\propto\exp\left\{ -\sum_{i=1}^{N}\ell\left(y_{ij},x_{i};\theta^{j}\right)\right\} .\label{eq:non_shared}
\end{equation}

We now apply SVGD \cite{liu2016stein} to sample the non-shared parts
$\left[\beta_{m}^{j}\right]_{m=1}^{M}$ for each task $j$ from the
distribution defined in (\ref{eq:non_shared}) as
\begin{equation}
\beta_{m}^{j}=\beta_{m}^{j}+\epsilon\psi_{j}^{*}\left(\beta_{m}^{j}\right),\label{eq:update_non_share}
\end{equation}
where $\psi_{j}^{*}\left(\beta_{m}^{j}\right)=\frac{1}{M}\sum_{a=1}^{M}\left[k_{2}\left(\beta_{a}^{j},\beta_{m}^{j}\right)\nabla_{\beta_{a}^{j}}\log p\left(\beta_{a}^{j}\mid\alpha_a,\mathbb{D}\right)+\nabla_{\beta_{a}^{j}}k_{2}\left(\beta_{a}^{j},\beta_{m}^{j}\right)\right]$ with which the term $\nabla_{\beta_{a}^{j}}\log p\left(\beta_{a}^{j}\mid\alpha_a,\mathbb{D}\right)$
can be estimated via the batch loss gradient using Equation (\ref{eq:non_shared}). 

\begin{algorithm}[!ht]
\begin{algorithmic}[1]

\REQUIRE A training set $\mathbb{D}=\left\{ \left(x_{i},y_{i1},...,y_{iK}\right)\right\} _{i=1}^{N}$.

\ENSURE The models $\theta_{m}=\left[\theta_{m}^{j}\right]_{j=1}^{K}$
with $m=1,...,M$, where $\theta_{m}^{j}=\left[\alpha_{m},\beta_{m}^{j}\right]$.

\STATE  Initialize a set of particles $\theta_{1:M}\sim q_{0}$ .

\FOR {$epoch=1$ to $\#epoch$}

\FOR {$iter=1$ to $\#iter$}

\STATE Update the shared parts $\left[\alpha_{m}\right]_{m=1}^{M}$
using Equation. (\ref{eq:update_share}).

\FOR {$j=1$ to $K$}

\STATE Update the non-shared part $\left[\beta_{m}^{j}\right]_{m=1}^{M}$
using Equation. (\ref{eq:update_non_share}).

\ENDFOR

\ENDFOR

\ENDFOR

\STATE \textbf{return} $\theta_{1:M}$.

\end{algorithmic}

\caption{Pseudocode for multi-task learning MT-SGD.\label{alg:alg-multi-task}}
\end{algorithm}

Algorithm \ref{alg:alg-multi-task} summarizes the key steps of our
multi-task MT-SGD. Basically, we alternatively update the shared parts
given the non-shared ones and vice versa.

\section{Experiments  \label{sec:experiment}}

In this section, we verify our MT-SGD by evaluating its performance on both synthetic and real-world datasets. For our experiments, we use the RBF kernel $k\left(\theta,\theta'\right)=\exp\left\{ -\norm{\theta-\theta'}_{2}^{2}/\left(2\sigma^{2}\right)\right\} $.
The detailed training and configuration are given in the supplementary
material. Our codes are available
at \url{https://github.com/VietHoang1512/MT-SGD}.

\subsection{Experiments on Toy Datasets}

\subsubsection{Sampling from Multiple Distributions \label{subsec:Sam_Mul_Syn}}

We first qualitatively analyze the behavior of the proposed method
on sampling from three target distributions. Each target distribution
is a mixture of two Gaussians as $p_{i}\left(\theta\right)=\pi_{i1}\mathcal{N}\left(\theta\mid\mu_{i1},\Sigma_{i1}\right)+\pi_{i2}\mathcal{N}\left(\theta\mid\mu_{i2},\Sigma_{i2}\right)$
($i=1,2,3)$ where the mixing proportions $\pi_{i1}=0.7,\forall i$,
$\pi_{i2}=0.3,\forall i$, the means $\mu_{11}=\left[4,-4\right]^{T},\mu_{12}=\left[0,0.5\right]^{T}$,
$\mu_{21}=\left[-4,4\right]^{T},\mu_{22}=\left[0.5,0\right]^{T}$,
and $\mu_{31}=\left[-3,-3\right]^{T},\mu_{32}=\left[0,0\right]^{T}$,
and the common covariance matrix $\Sigma_{ij}=\left[\begin{array}{cc}
0.5 & 0\\
0 & 0.5
\end{array}\right],i=1,2,3$ and $j=1,2$. It can be seen from Figure \ref{fig:Sam_Mul_Syn} that
there is a common high-density region spreading around the origin.
The fifty particles are drawn randomly in the space, and the initialization
is retained across experiments for a fair comparison. 

\begin{figure}[H]
\centering
\includegraphics[width=1\columnwidth]{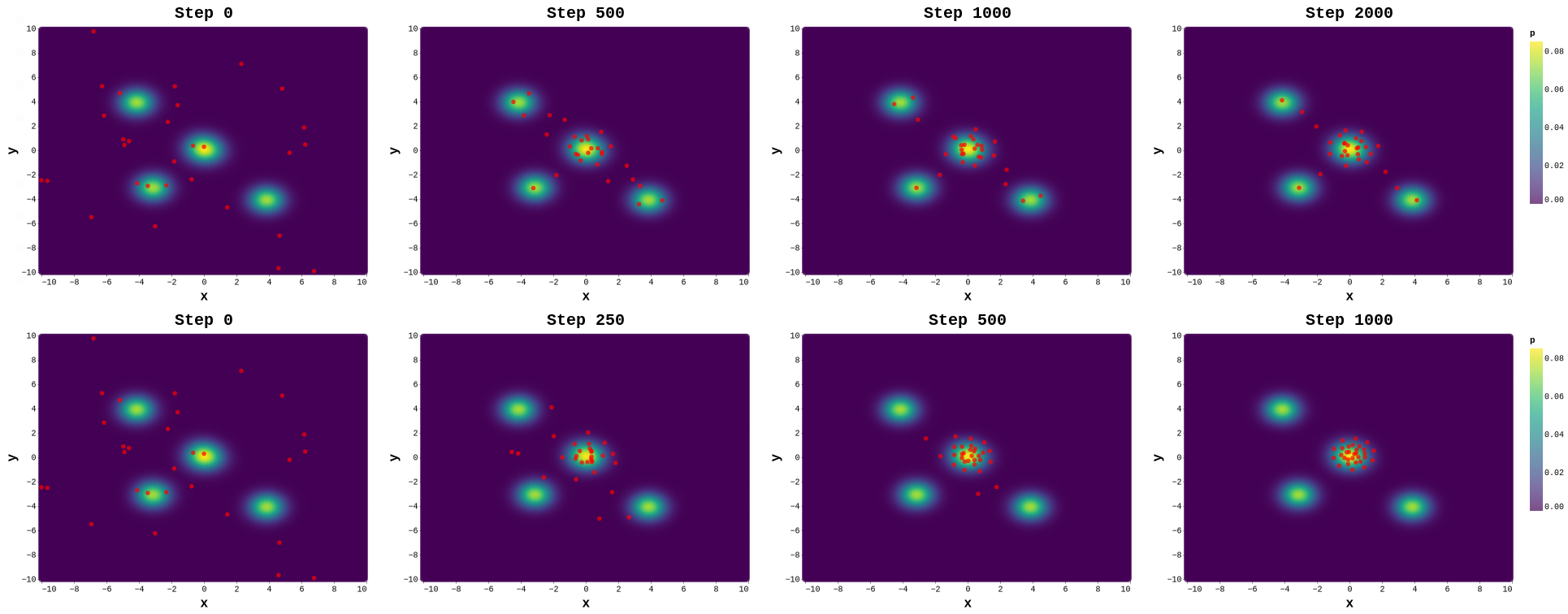}
\caption{Sampling from three mixtures of two Gaussian distributions with a joint
high-likelihood region. We run MOO-SVGD (top) and MT-SGD (bottom) to update the initialized
particles (left-most figures) until convergence using Adam optimizer
\cite{kingma2014adam}. While MOO-SVGD transports
 the initialized particles scattering on the distributions,  MT-SGD perfectly drives
them to diversify in the region of interest.\label{fig:Sam_Mul_Syn}}
\end{figure}
\vspace{-2mm}
Figure \ref{fig:Sam_Mul_Syn} shows the updated particles by MOO-SVGD
and MT-SGD at selected iterations, we observe that the particles from
MOO-SVGD spread out and tend to characterize all the modes, some of
them even scattered along trajectories due to the conflict in optimizing
multiple objectives. By contrast, our method is able to find and cover
the common high density region among target distributions with well-distributed
particles, which illustrates the basic principles of MT-SGD. Additionally,
at the $1,000$-th step, the training time for ours is 0.23 min, whereas
that for MOO-SVGD is 1.63 min. The reason is that MOO-SVGD requires solving an independent quadratic programming problem for each particle
at each step.

\subsubsection{Multi-objective Optimization}

The previous experiment illustrates that MT-SGD can be used to sample
from multiple target distributions, we next test our method on the
other low-dimensional multi-objectives OP from \cite{zitzler2000comparison}.
In particular, we use the two objectives ZDT3, whose Pareto front
consists of non-contiguous convex parts, to show our method simultaneously
minimizes both objective functions. Graphically, the simulation results
from Figure \ref{fig:zdt} show the difference in the convergence
behaviors between MOO-SVGD and MT-SGD: the solution set achieved
by MOO-SVGD covers the entire Pareto front, while ours distributes
and diversifies on the three middle curves (mostly concentrated in
the middle curve) which are the Pareto common having low values for
two objective functions in ZDT3.

\begin{figure}[!ht]
\vspace{-2mm}
\centering
\includegraphics[width=1\columnwidth]{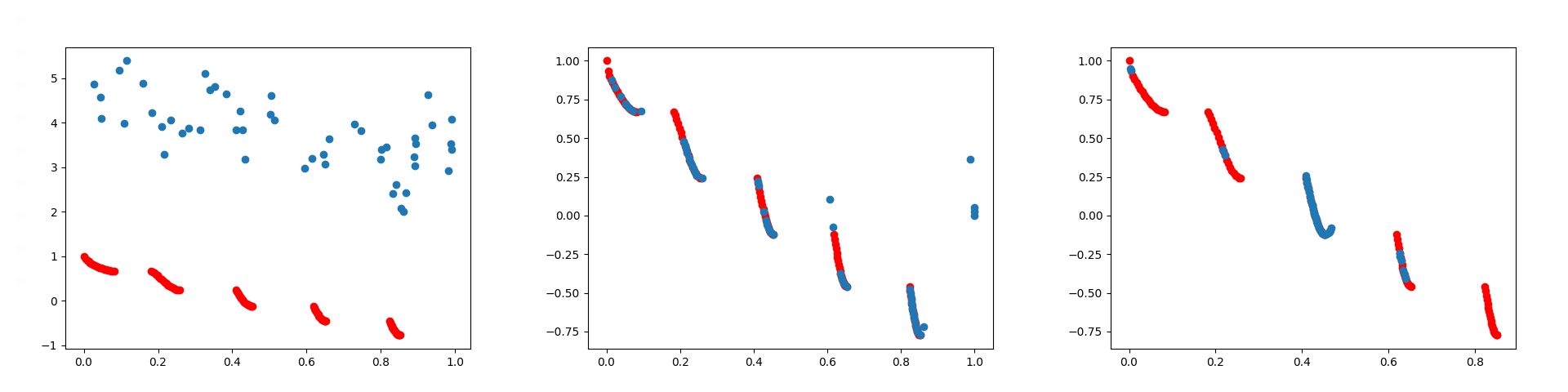}\caption{Solutions obtained by MOO-SVGD (mid) and MT-SGD (right) on ZDT3 problem
after 10,000 steps, with blue points representing particles and {red}
curves indicating the Pareto front. As expected, from initialized
particles (left), MOO-SVGD's solution set widely distributes on
the whole Pareto front while the one of MT-SGD concentrates
around middle curves (mostly the middle one). \label{fig:zdt} }
\end{figure}

\subsection{Experiments on Real Datasets}

\subsubsection{Experiments on Multi-Fashion+Multi-MNIST Datasets}

We apply the proposed MT-SGD method on multi-task learning, following Algorithm \ref{alg:alg-multi-task}. Our method is validated on
different benchmark datasets: (i) Multi-Fashion+MNIST \cite{NIPS2017_2cad8fa4},
(ii) Multi-MNIST, and (iii) Multi-Fashion. Each of them consists of
120,000 training and 20,000 testing images generated from MNIST
\cite{mnist} and FashionMNIST \cite{xiao2017fashion} by overlaying
an image on top of another: one in the top-left corner and one in
the bottom-right corner. Lenet \cite{mnist} (22,350 params) is employed
as the backbone architecture and trained for 100 epochs with SGD in this experimental setup. 

\begin{figure}[!ht]
\begin{centering}
\includegraphics[width=1\textwidth, trim=.0cm .0cm 0cm .0cm,clip]{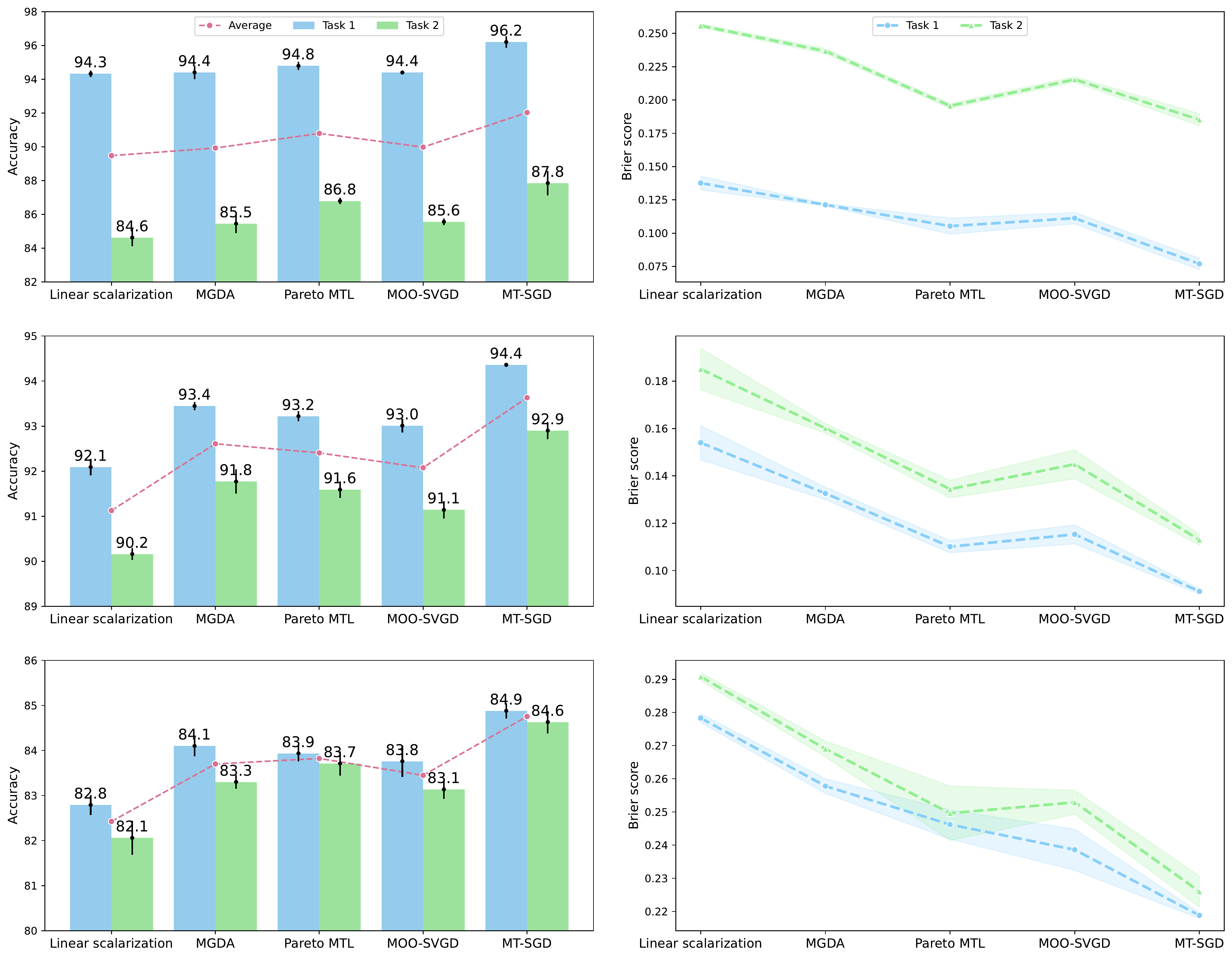}
\par\end{centering}
\caption{Results on Multi-Fashion+MNIST (left), Multi-MNIST (mid), and Multi-Fashion
(right). We report the ensemble accuracy (\emph{higher is better})
and the Brier score (\emph{lower is better}) {over 3 independent runs, as well as the standard deviation (the error bars and shaded areas in the figures)}.}
\label{fig:mnist-acc-brier}
\end{figure}

\textbf{Baselines: }In multi-task experiments, the introduced MT-SGD
is compared with state-of-the-art baselines including MGDA \cite{sener2018multi},
Pareto MTL \cite{lin2019pareto}, MOO-SVGD \cite{liu2021profiling}.
We note that to reproduce results for these baselines, we either use
the author's official implementation released on GitHub
or ask the authors for their codes. For MOO-SVGD and Pareto MTL, the
reported result is from the ensemble prediction of five particle models.
Additionally, for linear scalarization and MGDA, we train five particle
models independently with different initializations and then ensemble
these models. 

\textbf{Evaluation metrics: }We compare MT-SGD against baselines regarding both average accuracy and predictive uncertainty. Besides
the commonly used accuracy metric, we measure the quality and diversity
of the particle models by relying on two other popular Bayesian metrics: Brier
score \cite{brier1950verification, ovadia2019can} and expected
calibration error (ECE) \cite{dawid1982well, naeini2015obtaining}.

From Figure \ref{fig:mnist-acc-brier}, we observe that MT-SGD consistently
improves model performance across all tasks in both accuracy and Brier
score by large margins, compared to existing techniques in the literature.
The network trained using linear scalarization, as expected, produces
inferior ensemble results while utilizing MOO techniques helps yield
better performances. Overall, our proposed method surpasses the second-best
baseline by at least 1\% accuracy in any experiment. Furthermore,
Table \ref{tab:mnist-ece} provides a comparison between these methods
in terms of expected calibration error, in which MT-SGD also consistently
provides the lowest expected calibration error, illustrating our method's
ability to obtain well-calibrated models (the accuracy is closely
approximated by the produced confidence score). It is also worth noting
that while Pareto MTL has higher accuracy, MOO-SVGD produces slightly
better calibration estimation.
\begin{table}[th]

\caption{Expected calibration error (\%) (num\_bin $=10$) on Multi-MNIST, Multi-Fashion and Multi-Fashion+MNIST
datasets {over 3 runs}. We use the \textbf{bold} font to highlight the best results} \label{tab:mnist-ece}.
\vspace{1mm}
\centering{}\resizebox{1.0\textwidth}{!}{%%
\begin{tabular}{c|c|c|c|c|c|c}
\toprule
Dataset & Task & Linear scalarization & MGDA & Pareto MTL & MOO-SVGD & MT-SGD\tabularnewline
\midrule
\multirow{2}{*}{Multi-Fashion+MNIST} & Top left & 21.33 $\pm$ 0.83 & 19.91 $\pm$ 0.26 & 9.44 $\pm$ 0.65 & 9.47 $\pm$ 0.89 & \textbf{4.65} $\mathbf{\pm}$ \textbf{0.11} \tabularnewline
 & Bottom right & 17.76 $\pm$ 0.60 & 16.29 $\pm$ 1.35 & 4.73 $\pm$ 0.46 & 4.95 $\pm$ 0.49 & \textbf{3.17} $\mathbf{\pm}$ \textbf{0.20}\tabularnewline
\midrule
\multirow{2}{*}{Multi-MNIST} & Top left & 17.37 $\pm$ 0.62 & 15.29 $\pm$ 0.49 & 5.45 $\pm$ 0.85 & 5.37 $\pm$ 0.51 & \textbf{3.28} $\mathbf{\pm}$ \textbf{0.20}\tabularnewline
 & Bottom right & 18.09 $\pm$ 1.11 & 16.87 $\pm$ 0.67 & 7.34 $\pm$ 1.08 & 6.74 $\pm$ 0.50 & \textbf{4.00} $\mathbf{\pm}$ \textbf{0.19} \tabularnewline
\midrule
\multirow{2}{*}{Multi-Fashion} & Top left & 15.86 $\pm$ 1.20 & 14.48 $\pm$ 0.95 & 8.55 $\pm$ 0.69 & 5.48 $\pm$ 0.53 & \textbf{3.80} $\mathbf{\pm}$ \textbf{0.38}\tabularnewline
 & Bottom right & 15.98 $\pm$ 1.32 & 14.70 $\pm$ 1.63 & 9.01 $\pm$ 1.77 & 6.11 $\pm$ 0.54 & \textbf{4.47} $\mathbf{\pm}$ \textbf{0.21}\tabularnewline
\bottomrule
\end{tabular}}
\end{table}
\subsubsection{Experiment on CelebA Dataset}

In this experiment, we verify the significance of MT-SGD on a larger neural network:
Resnet18 \cite{he2016deep}, which consists of 11.4M parameters. We
take the first 10 binary classification tasks and randomly select a subset of 40k images
from the CelebA dataset \cite{liu2015deep}. Note that in this experiment, we consider
Single task, in which 10 models are trained separately and serves as a strong baseline.
\begin{table}[th]
\caption{Results on CelebA dataset, regarding accuracy
and expected calibration error. For the full names of the tasks, please
refer to our supplementary material. While MGDA trains a single model
only to adapt on all tasks, reported performance of MOO-SVGD and MT-SGD is the
ensemble results from five particle models.\label{tab:celeba-acc-ece}}
\vspace{1mm}
\centering{}\resizebox{1.0\textwidth}{!}{%%
\begin{tabular}{c|c|c|c|c|c|c|c|c|c|c|c|c}
\toprule
 & Method & 5S & AE & Att & BUE & Bald & Bangs & BL & BN & BlaH & BloH & Average\tabularnewline
\midrule 
\multirow{4}{*}{Acc (\%)} & Single task & 91.8 & 84.6 & \textbf{80.3} & 81.9 & 98.8 & 94.8 & 85.8 & 81.3 & 89.6 & 94.2 & 88.3\tabularnewline
 & MGDA & 91.8 & 84.0 & 79.0 & 81.3 & 98.6 & 94.6 & 83.6 & 81.6 & 89.8 & 93.8 & 87.8\tabularnewline
 & MOO-SVGD & 92.3 & 84.2 & 78.9 & 81.2 & 98.9 & 94.5 & \textbf{86.4} & 80.0 & 90.8 & 94.8 & 88.2\tabularnewline
 & MT-SGD & \textbf{92.6} & \textbf{84.8} & \textbf{80.3} & \textbf{82.9} & \textbf{99.1} & \textbf{95.2} & 86.3 & \textbf{82.6} & \textbf{91.1} & \textbf{95.0} & \textbf{89.0}\tabularnewline
\midrule 
\multirow{4}{*}{ECE (\%)} & Single task & 3.3 & 2.4 & 4.4 & 3.9 & 0.7 & 1.6 & 5.7 & 6.5 & 3.1 & 1.1 & 3.3\tabularnewline
 & MGDA & 1.4 & 1.1 & 3.5 & 7.3 & \textbf{0.3} & 1.8 & 6.9 & 5.4 & 2.1 & 1.2 & 3.1\tabularnewline
 & MOO-SVGD & 2.8 & 1.9 & 3.1 & 5.6 & \textbf{0.3} & \textbf{0.5} & \textbf{4.7} & 3.3 & \textbf{1.3} & 1.3 & 2.5\tabularnewline
 & MT-SGD & \textbf{1.2} & \textbf{1.4} & \textbf{1.7} & \textbf{2.3} & 0.6 & 1.7 & 6.8 & \textbf{1.2} & 2.1 & \textbf{0.9} & \textbf{2.0}\tabularnewline
\bottomrule

\end{tabular}}
\end{table}

The performance comparison of the mentioned models in CelebA experiment is shown in Table
\ref{tab:celeba-acc-ece}. As clearly seen from the upper part of
the table, MT-SGD performs best in all tasks, except in BL, where MOO-SVGD is slightly better (86.4\% vs 86.3\%). Moreover, our method matches or beats Single task - the second-best baseline in all tasks. Regarding the well-calibrated uncertainty estimates, ensemble learning methods exhibit better results. In particular, MT-SGD and MOO-SVGD provide the best calibration performances, which are $2\%$ and $2.5\%$, respectively, which emphasizes the importance of efficient ensemble
learning for enhanced calibration.

% \section{Related Work}

% \input{related_work.tex}

\section{Conclusion \label{sec:conclusion}}

In this paper, we propose Stochastic Multiple Target \textbf{S}ampling
Gradient Descent (MT-SGD), allowing us to sample the particles from
the joint high-likelihood of multiple target distributions. Our MT-SGD
is theoretically guaranteed to simultaneously reduce the divergences
to the target distributions. Interestingly, the asymptotic analysis
of our MT-SGD reduces exactly to the multi-objective optimization.
We conduct comprehensive experiments to demonstrate that by driving
the particles to the Pareto common (the joint high-likelihood of
multiple target distributions), our MT-SGD can outperform the baselines on the ensemble accuracy and the well-known Bayesian metrics such as the expected calibration error and the Brier score.

\clearpage{}

\bibliographystyle{plain}
\bibliography{ref}

\begin{thebibliography}{10}

\bibitem{anderson2021modest}
Mark Anderson and Carlos G{\'o}mez-Rodr{\'\i}guez.
\newblock A modest {P}areto optimisation analysis of dependency parsers in
  2021.
\newblock In {\em Proceedings of the 17th International Conference on Parsing
  Technologies and the IWPT 2021 Shared Task on Parsing into Enhanced Universal
  Dependencies (IWPT 2021)}, pages 119--130, Online, August 2021. Association
  for Computational Linguistics.

\bibitem{bishop}
Christopher~M. Bishop.
\newblock {\em Pattern Recognition and Machine Learning (Information Science
  and Statistics)}.
\newblock Springer-Verlag, Berlin, Heidelberg, 2006.

\bibitem{brier1950verification}
Glenn~W Brier et~al.
\newblock Verification of forecasts expressed in terms of probability.
\newblock {\em Monthly weather review}, 78(1):1--3, 1950.

\bibitem{chen2018gradnorm}
Zhao Chen, Vijay Badrinarayanan, Chen-Yu Lee, and Andrew Rabinovich.
\newblock Gradnorm: Gradient normalization for adaptive loss balancing in deep
  multitask networks.
\newblock In {\em International Conference on Machine Learning}, pages
  794--803. PMLR, 2018.

\bibitem{chen2021pareto}
Zhengyu Chen, Jixie Ge, Heshen Zhan, Siteng Huang, and Donglin Wang.
\newblock Pareto self-supervised training for few-shot learning.
\newblock In {\em Proceedings of the IEEE/CVF Conference on Computer Vision and
  Pattern Recognition}, pages 13663--13672, 2021.

\bibitem{chennupati2021adaptive}
Sumanth Chennupati, Mohammad~Mahdi Kamani, Zhongwei Cheng, and Lin Chen.
\newblock Adaptive distillation: Aggregating knowledge from multiple paths for
  efficient distillation.
\newblock {\em arXiv preprint arXiv:2110.09674}, 2021.

\bibitem{dawid1982well}
A~Philip Dawid.
\newblock The well-calibrated bayesian.
\newblock {\em Journal of the American Statistical Association},
  77(379):605--610, 1982.

\bibitem{desideri2012multiple}
Jean-Antoine D{\'e}sid{\'e}ri.
\newblock Multiple-gradient descent algorithm (mgda) for multiobjective
  optimization.
\newblock {\em Comptes Rendus Mathematique}, 350(5-6):313--318, 2012.

\bibitem{du2020agree}
Shangchen Du, Shan You, Xiaojie Li, Jianlong Wu, Fei Wang, Chen Qian, and
  Changshui Zhang.
\newblock Agree to disagree: Adaptive ensemble knowledge distillation in
  gradient space.
\newblock {\em advances in neural information processing systems},
  33:12345--12355, 2020.

\bibitem{ghosh2013towards}
Shaona Ghosh, Chris Lovell, and Steve~R Gunn.
\newblock Towards pareto descent directions in sampling experts for multiple
  tasks in an on-line learning paradigm.
\newblock In {\em 2013 AAAI Spring Symposium Series}, 2013.

\bibitem{he2016deep}
Kaiming He, Xiangyu Zhang, Shaoqing Ren, and Jian Sun.
\newblock Deep residual learning for image recognition.
\newblock In {\em Proceedings of the IEEE conference on computer vision and
  pattern recognition}, pages 770--778, 2016.

\bibitem{jaggi2013revisiting}
Martin Jaggi.
\newblock Revisiting frank-wolfe: Projection-free sparse convex optimization.
\newblock In {\em International Conference on Machine Learning}, pages
  427--435. PMLR, 2013.

\bibitem{kendall2018multi}
Alex Kendall, Yarin Gal, and Roberto Cipolla.
\newblock Multi-task learning using uncertainty to weigh losses for scene
  geometry and semantics.
\newblock In {\em Proceedings of the IEEE conference on computer vision and
  pattern recognition}, pages 7482--7491, 2018.

\bibitem{kingma2014adam}
Diederik~P Kingma and Jimmy Ba.
\newblock Adam: A method for stochastic optimization.
\newblock {\em Proceedings of the International Conference on Learning
  Representations}, 2014.

\bibitem{mnist}
Yann LeCun, L{\'e}on Bottou, Yoshua Bengio, and Patrick Haffner.
\newblock Gradient-based learning applied to document recognition.
\newblock {\em Proceedings of the IEEE}, 86(11):2278--2324, 1998.

\bibitem{lin2019pareto}
Xi~Lin, Hui-Ling Zhen, Zhenhua Li, Qing-Fu Zhang, and Sam Kwong.
\newblock Pareto multi-task learning.
\newblock {\em Advances in neural information processing systems}, 32, 2019.

\bibitem{liu2016stein}
Qiang Liu and Dilin Wang.
\newblock Stein variational gradient descent: A general purpose bayesian
  inference algorithm.
\newblock {\em Advances in neural information processing systems}, 29, 2016.

\bibitem{liu2021profiling}
Xingchao Liu, Xin Tong, and Qiang Liu.
\newblock Profiling pareto front with multi-objective stein variational
  gradient descent.
\newblock {\em Advances in Neural Information Processing Systems}, 34, 2021.

\bibitem{liu2015deep}
Ziwei Liu, Ping Luo, Xiaogang Wang, and Xiaoou Tang.
\newblock Deep learning face attributes in the wild.
\newblock In {\em Proceedings of the IEEE international conference on computer
  vision}, pages 3730--3738, 2015.

\bibitem{mahapatra2020multi}
Debabrata Mahapatra and Vaibhav Rajan.
\newblock Multi-task learning with user preferences: Gradient descent with
  controlled ascent in pareto optimization.
\newblock In {\em International Conference on Machine Learning}, pages
  6597--6607. PMLR, 2020.

\bibitem{murphy}
Kevin~P. Murphy.
\newblock {\em Probabilistic Machine Learning: An introduction}.
\newblock MIT Press, 2022.

\bibitem{naeini2015obtaining}
Mahdi~Pakdaman Naeini, Gregory Cooper, and Milos Hauskrecht.
\newblock Obtaining well calibrated probabilities using bayesian binning.
\newblock In {\em Twenty-Ninth AAAI Conference on Artificial Intelligence},
  2015.

\bibitem{navon2021learning}
Aviv Navon, Aviv Shamsian, Ethan Fetaya, and Gal Chechik.
\newblock Learning the pareto front with hypernetworks.
\newblock In {\em International Conference on Learning Representations}, 2021.

\bibitem{ovadia2019can}
Yaniv Ovadia, Emily Fertig, Jie Ren, Zachary Nado, David Sculley, Sebastian
  Nowozin, Joshua Dillon, Balaji Lakshminarayanan, and Jasper Snoek.
\newblock Can you trust your model's uncertainty? evaluating predictive
  uncertainty under dataset shift.
\newblock {\em Advances in neural information processing systems}, 32, 2019.

\bibitem{parisi2014policy}
Simone Parisi, Matteo Pirotta, Nicola Smacchia, Luca Bascetta, and Marcello
  Restelli.
\newblock Policy gradient approaches for multi-objective sequential decision
  making.
\newblock In {\em 2014 International Joint Conference on Neural Networks
  (IJCNN)}, pages 2323--2330. IEEE, 2014.

\bibitem{pirotta2016inverse}
Matteo Pirotta and Marcello Restelli.
\newblock Inverse reinforcement learning through policy gradient minimization.
\newblock In {\em Thirtieth AAAI Conference on Artificial Intelligence}, 2016.

\bibitem{NIPS2017_2cad8fa4}
Sara Sabour, Nicholas Frosst, and Geoffrey~E Hinton.
\newblock Dynamic routing between capsules.
\newblock In I.~Guyon, U.~Von Luxburg, S.~Bengio, H.~Wallach, R.~Fergus,
  S.~Vishwanathan, and R.~Garnett, editors, {\em Advances in Neural Information
  Processing Systems}, volume~30. Curran Associates, Inc., 2017.

\bibitem{sener2018multi}
Ozan Sener and Vladlen Koltun.
\newblock Multi-task learning as multi-objective optimization.
\newblock {\em Advances in neural information processing systems}, 31, 2018.

\bibitem{vijayakumar2002statistical}
Sethu Vijayakumar, Aaron D'souza, Tomohiro Shibata, J{\"o}rg Conradt, and
  Stefan Schaal.
\newblock Statistical learning for humanoid robots.
\newblock {\em Autonomous Robots}, 12(1):55--69, 2002.

\bibitem{MAL-001}
Martin~J. Wainwright and Michael~I. Jordan.
\newblock Graphical models, exponential families, and variational inference.
\newblock {\em Foundations and Trends in Machine Learning}, 1(1–2):1--305,
  2008.

\bibitem{welling2011bayesian}
Max Welling and Yee~W Teh.
\newblock Bayesian learning via stochastic gradient langevin dynamics.
\newblock In {\em Proceedings of the 28th international conference on machine
  learning (ICML-11)}, pages 681--688. Citeseer, 2011.

\bibitem{xiao2017fashion}
Han Xiao, Kashif Rasul, and Roland Vollgraf.
\newblock Fashion-mnist: a novel image dataset for benchmarking machine
  learning algorithms.
\newblock {\em arXiv preprint arXiv:1708.07747}, 2017.

\bibitem{ye2021multi}
Feiyang Ye, Baijiong Lin, Zhixiong Yue, Pengxin Guo, Qiao Xiao, and Yu~Zhang.
\newblock Multi-objective meta learning.
\newblock {\em Advances in Neural Information Processing Systems}, 34, 2021.

\bibitem{zitzler2000comparison}
Eckart Zitzler, Kalyanmoy Deb, and Lothar Thiele.
\newblock Comparison of multiobjective evolutionary algorithms: Empirical
  results.
\newblock {\em Evolutionary computation}, 8(2):173--195, 2000.

\end{thebibliography}

\clearpage
\section*{Checklist}
\begin{enumerate}
\item For all authors... 
\begin{enumerate}
\item Do the main claims made in the abstract and introduction accurately
reflect the paper's contributions and scope? \answerYes{}
\item Did you describe the limitations of your work? \answerYes{In the supplementary material, we discuss the computational overhead of our proposed method, compared with related work.} 
\item Did you discuss any potential negative societal impacts of your work?
\answerNo{} 
\item Have you read the ethics review guidelines and ensured that your paper
conforms to them? \answerYes{}
\end{enumerate}
\item If you are including theoretical results... 
\begin{enumerate}
\item Did you state the full set of assumptions of all theoretical results?
\answerYes{We include assumptions in all theoretical statements.} 
\item Did you include complete proofs of all theoretical results? \answerYes{The main results and lemmas appear in the main paper and their complete proof is put in the supplementary material.} 
\end{enumerate}
\item If you ran experiments... 
\begin{enumerate}
\item Did you include the code, data, and instructions needed to reproduce
the main experimental results (either in the supplemental material
or as a URL)? \answerYes{They will be attached to the supplementary material.}
\item Did you specify all the training details (e.g., data splits, hyperparameters,
how they were chosen)? \answerYes{The training details are provided with the code, but the important details should be in the main paper.} 
\item Did you report error bars (e.g., with respect to the random seed after
running experiments multiple times)? \answerYes{We run each experiment over a number of random seeds and report the average result along with the standard deviation.} 
\item Did you include the total amount of compute and the type of resources
used (e.g., type of GPUs, internal cluster, or cloud provider)? \answerYes{The detailed training environment will be attached in the supplementary material.}
\end{enumerate}
\item If you are using existing assets (e.g., code, data, models) or curating/releasing
new assets... 
\begin{enumerate}
\item If your work uses existing assets, did you cite the creators? \answerYes{We cited the authors of the codes that are included in our implementations.} 
\item Did you mention the license of the assets? \answerNo{All the assets we used are published for research purposes.}
\item Did you include any new assets either in the supplemental material
or as a URL? \answerYes{The tools used in our implementation are presented in our supplementary material.}
\item Did you discuss whether and how consent was obtained from people whose
data you're using/curating? \answerNo{The data we used are open to the public.} 
\item Did you discuss whether the data you are using/curating contains personally
identifiable information or offensive content? \answerNo{The data we used are open to the public.} 
\end{enumerate}
\item If you used crowdsourcing or conducted research with human subjects... 
\begin{enumerate}
\item Did you include the full text of instructions given to participants
and screenshots, if applicable? \answerNA{} 
\item Did you describe any potential participant risks, with links to Institutional
Review Board (IRB) approvals, if applicable? \answerNA{} 
\item Did you include the estimated hourly wage paid to participants and
the total amount spent on participant compensation? \answerNA{} 
\end{enumerate}
\end{enumerate}

\clearpage
\appendix
\begin{center}
{\bf \large Supplement to ``Stochastic Multiple Target Sampling Gradient Descent''}
\end{center}

These appendices provide supplementary details and results of MT-SGD, including our theory development and additional experiments. This consists of the following sections:
\begin{itemize}
\item Appendix \ref{sec:Proofs} contains the proofs and derivations of our theory development.
\item Appendix \ref{sec:Additional-Experiments} contains the network architectures,
experiment settings of our experiments and additional ablation studies.
\end{itemize}

\section{Proofs of Our Theory Development \label{sec:Proofs}}

\subsection{Derivations for the Taylor expansion formulation}
We have
\begin{align}
\nabla_{\epsilon}D_{KL}\left(q^{[T]}\Vert p_{i}\right)\Big|_{\epsilon=0}=-\left\langle \phi,\psi_{i}\right\rangle _{\mathcal{H}_{k}^{d}}. \label{eq:key_equation}
\end{align}
% As presented in the main paper, we have:
% \begin{equation}
% D_{KL}\left(q^{[T]}\Vert p_{i}\right)=D_{KL}\left(q\Vert p_{i}\right)+\nabla_{\epsilon}D_{KL}\left(q^{[T]}\Vert p_{i}\right)\Big|_{\epsilon=0}\epsilon+O\left(\epsilon^{2}\right) \label{eq:taylor}
% \end{equation}

\textit{Proof of Equation~(\ref{eq:key_equation}):}
Since $T$ is assumed to be an invertible mapping, we have the following equations:
\begin{equation*}
D_{KL}\left(q^{[T]}\Vert p_{i}\right) = D_{KL}\left(T\#q\Vert p_{i}\right) = D_{KL}(q||T^{-1}\# p_{i})
\end{equation*}
and
\begin{equation}
D_{KL}(q||T^{-1}\# p_{i})=D_{KL}(q||T^{-1}\# p_{i})\big|_{\epsilon=0}+\epsilon\nabla_{\epsilon}D_{KL}(q||T^{-1}\# p_{i})\big|_{\epsilon=0}+O(\epsilon^{2}). \label{eq:kl_taylor}
\end{equation}

According to the change of variables formula, we have $T^{-1}\# p_{i}(\theta)=p_i(T(\theta))|\det\nabla_\theta T(\theta)|$,
then: 
\[
D_{KL}(q||T^{-1}\# p_{i})=\mathbb{E}_{\theta\sim q}[\log q(\theta)-\log p_{i}(T(\theta))-\log|\det\nabla_{\theta}T(\theta)|].
\]

Using this, the first term in Equation (\ref{eq:kl_taylor}) is rewritten
as: 
\begin{align}
D_{KL}(q||p_{i})= & D_{KL}(T\# q||p_{i})\big|_{\epsilon=0}=D_{KL}(q||T^{-1}\# p_{i})\big|_{\epsilon=0}\nonumber \\
= & \mathbb{E}_{\theta\sim q}[\log q(\theta)-\log p_{i}(\theta)-\log|\det\nabla_{\theta}\theta|]=\mathbb{E}_{\theta\sim q}[\log q(\theta)-\log p_{i}(\theta)]. \label{eq:kl_coef_0}
\end{align}

Similarly, the second term in Equation (\ref{eq:kl_taylor}) could
be expressed as:

\begin{align}
\nabla_{\epsilon}D_{KL}(q||T^{-1}\# p_{k})\big|_{\epsilon=0} & =\mathbb{E}_{\theta\sim q}[\nabla_{\epsilon}\log q(\theta)-\nabla_{\epsilon}\log p_{i}(T(\theta))-\nabla_{\epsilon}\log|\det\nabla_{\theta}T(\theta)|\Big]\big|_{\epsilon=0}\nonumber \\
 & =-\mathbb{E}_{\theta\sim q}[\nabla_{\epsilon}\log p_{i}(T(\theta))+\nabla_{\epsilon}\log|\det\nabla_{\theta}T(\theta)|\Big]\big|_{\epsilon=0}\nonumber \\
 & =-\mathbb{E}_{\theta\sim q}[\nabla_{T}\log p_{i}(T(\theta))\nabla_{\epsilon}T(\theta)\Big]\big|_{\epsilon=0}\nonumber \\
 & \;\;\;\;\;\;\;\;\;\;\;\;\;\;\;\;\;\;\;-\mathbb{E}_{\theta\sim q}\Big[\frac{1}{|\det\nabla_{\theta}T(\theta)|}\frac{|\det\nabla_{\theta}T(\theta)|}{\det\nabla_{\theta}T(\theta)}\nabla_{\epsilon}\det\nabla_{\theta}T(\theta)\Big]\big|_{\epsilon=0}\label{eq:kl_firstorder}\\
 & =-\mathbb{E}_{\theta\sim q}[\nabla_{T}\log p_{i}(T(\theta))\phi(\theta)]\big|_{\epsilon=0}\nonumber \\
 & \;\;\;\;\;\;\;\;\;\;\;\;\;\;\;\;\;\;\;\;-\mathbb{E}_{\theta\sim q}p\Big[\frac{\det\nabla_{\theta}T(\theta)\,\trace((\nabla_{\theta}T(\theta)^{-1}\nabla_{\epsilon}\nabla_{\theta}T(\theta))}{\det\nabla_{\theta}T(\theta\theta)}\Big]\big|_{\epsilon=0}\nonumber \\
 & =-\mathbb{E}_{\theta\sim q}[\nabla_{\theta}\log p_{i}(\theta)\phi(\theta)+\trace(\nabla_{\theta}\phi(\theta))]. \nonumber 
\end{align}

It could be shown from the reproducing property of the RKHS that $\phi_{i}(\theta)=\left\langle \phi_{i}(\cdot),k(\theta,\cdot)\right\rangle _{\mathcal{H}_{k}}$,
then we find that 
\begin{equation}
\frac{\partial\phi_{i}(\theta)}{\partial\hat{\theta}_{i}}=\left\langle \phi_{i}(\cdot),\frac{\partial k(\theta,\cdot)}{\partial\hat{\theta}_{i}}\right\rangle _{\mathcal{H}_{k}}. \label{eq:derivative_RKHS}
\end{equation}

Let $U_{d\times d}=\nabla_{\theta}\phi(\theta)$ whose $u_{i}^{T}$ denotes
the $i^{th}$ row vector and the particle $\theta\in \mathbb{R}^{d}$ is represented by $\{\hat{\theta}\}_{i=1}^{d}$, the row vector $u_{i}^{T}$ is given by: 
\begin{equation}
u_{i}^{T}\coloneqq\frac{\partial\phi_{i}(\theta)}{\partial \theta}=\frac{\partial\phi_{i}(\theta)}{\partial(\hat{\theta}_{1},\hat{\theta}_{2},\dots,\hat{\theta}_{d})}=\Big[\frac{\partial\phi_{i}(\theta)}{\partial\hat{\theta}_{1}};\frac{\partial\phi_{i}(\theta)}{\partial\hat{\theta}_{2}};\dots;\frac{\partial\phi_{i}(\theta)}{\partial\hat{\theta}_{d}}\Big]. \label{eq:derivative_particles}
\end{equation}

Combining Property (\ref{eq:derivative_RKHS}) and Equation (\ref{eq:derivative_particles}),
we have: 
\begin{align}
u_{i}^{T} & \coloneqq\Big[\frac{\partial\phi_{i}(\theta)}{\partial\hat{\theta}_{1}};\frac{\partial\phi_{i}(\theta)}{\partial\hat{\theta}_{2}};\dots;\frac{\partial\phi_{i}(\theta)}{\partial\hat{\theta}_{d}}\Big]\nonumber \\
 & =\Big[\left\langle \phi_{i}(\cdot),\frac{\partial k(\theta,\cdot)}{\partial\hat{\theta}_{1}}\right\rangle _{\mathcal{H}_{k}};\left\langle \phi_{i}(\cdot),\frac{\partial k(\theta,\cdot)}{\partial\hat{\theta}_{2}}\right\rangle _{\mathcal{H}_{k}};\dots;\left\langle \phi_{i}(\cdot),\frac{\partial k(\theta,\cdot)}{\partial\hat{\theta}_{d}}\right\rangle _{\mathcal{H}_{k}}\Big]. \label{eq:derivative_particles2}
\end{align}

Substituting Equation~(\ref{eq:derivative_particles2}) to Equation~(\ref{eq:kl_firstorder}),
the linear term of the Taylor expansion could be derived as: 
\begin{align*}
\nabla_{\epsilon}D_{KL}(q||T^{-1}\# p_{i})\big|_{\epsilon=0} 
& =-\mathbb{E}_{\theta\sim q}[\nabla_{\theta}\log p_{i}(\theta)\phi(\theta)+\trace(\nabla_{\theta}\phi(\theta))]\nonumber   \\
 & =-\mathbb{E}_{\theta\sim q}\Big[\sum_{j=1}^{d}\left\langle \phi_{j}(\cdot),k(\theta,\cdot)\right\rangle _{\mathcal{H}_{k}}(\nabla_{\theta}\log p_{i}(\theta))_{j}+\frac{\partial\phi_{j}(\theta)}{\partial\hat{\theta}_{j}})\Big]\nonumber \\
 & =-\sum_{j=1}^{d}\mathbb{E}_{\theta\sim q}\Big[\left\langle \phi_{j}(\cdot),k(\theta,\cdot)(\nabla_{\theta}\log p_{i}(\theta))_{j}\right\rangle _{\mathcal{H}_{k}}+\left\langle \phi_{j}(\cdot),\Big(\frac{\partial k(\theta,\cdot)}{\partial \theta}\Big)_j\right\rangle _{\mathcal{H}_{k}}\Big]\label{eq:kl_coef_1}\\
 & =-\sum_{j=1}^{d}\left\langle \phi_{j}(\cdot),\mathbb{E}_{\theta\sim q}\Big[k(\theta,\cdot)(\nabla_{\theta}\log p_{i}(\theta))_{j}+\Big(\frac{\partial k(\theta,\cdot)}{\partial \theta}\Big)_j\Big]\right\rangle _{\mathcal{H}_{k}}\nonumber \\
 & =-\left\langle \phi(\cdot),\psi(\cdot)\right\rangle _{\mathcal{H}_{k}^{d}}, \nonumber 
\end{align*}
where $(v)_j$ denotes the $j$-th element of $v$ and   $\psi(\cdot)\in\mathcal{H}_{k}^{d}$ is a matrix whose $j^{th}$
column vector is given by 
\[
\mathbb{E}_{\theta\sim q}\Big[k(\theta,\cdot)(\nabla_{\theta}\log p_{i}(\theta))_{j}+\Big(\frac{\partial k(\theta,\cdot)}{\partial \theta}\Big)_j\Big].
\]
In other word, the formula of $\psi(\cdot)$ becomes
\[
\mathbb{E}_{\theta\sim q}\Big[k(\theta,\cdot)\nabla_{\theta}\log p_{i}(\theta)+\frac{\partial k(\theta,\cdot)}{\partial \theta}\Big].
\]
As a consequence, we obtain the conclusion of Equation~(\ref{eq:key_equation}). 

\subsection{Proof of Lemma \ref{lem:common_direction} \label{subsec:common_direction}}
Before proving this lemma, let us re-state it:
\begin{lem}
Let $w^{*}$ be the optimal solution of the optimization problem  $w^{*}=\argmin{w\in\simplex_{K}}w^{T}Uw$
and $\phi^{*}=\sum_{i=1}^{K}w_{i}^{*}\phi_{i}^{*}$, where $\simplex_{K}=\left\{ \pi\in\mathbb{R}_{+}^{K}:\norm{\pi}_{1}=1\right\} $
and $U\in\mathbb{R}^{K\times K}$ with $U_{ij}=\left\langle \phi_{i}^{*},\phi_{j}^{*}\right\rangle _{\mathcal{H}_{k}^{d}}$,
then we have
\[
\left\langle \phi^{*},\phi_{i}^{*}\right\rangle _{\mathcal{H}_{k}^{d}}\geq\norm{\phi^{*}}_{\mathcal{H}_{k}^{d}}^{2},i=1,...,K. 
\]
\end{lem}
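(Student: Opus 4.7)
The plan is to treat $\min_{w\in\simplex_{K}}w^{T}Uw$ as a convex quadratic program — $U$ is the Gram matrix of $\phi_{1}^{*},\dots,\phi_{K}^{*}$ in the RKHS and hence positive semidefinite, while the feasible set is a compact polytope — and to read the desired inequality directly off the KKT conditions at $w^{*}$. The first step is to translate the claim into a purely matrix statement: expanding $\phi^{*}=\sum_{i}w_{i}^{*}\phi_{i}^{*}$ in $\mathcal{H}_{k}^{d}$ gives $\norm{\phi^{*}}_{\mathcal{H}_{k}^{d}}^{2}=(w^{*})^{T}Uw^{*}$ and $\left\langle \phi^{*},\phi_{i}^{*}\right\rangle _{\mathcal{H}_{k}^{d}}=(Uw^{*})_{i}$, so it suffices to prove $(Uw^{*})_{i}\geq (w^{*})^{T}Uw^{*}$ for every $i=1,\dots,K$.

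Next I would form the Lagrangian
\[
L(w,\lambda,\mu)=w^{T}Uw-\lambda\Bigl(\sum_{j=1}^{K}w_{j}-1\Bigr)-\sum_{j=1}^{K}\mu_{j}w_{j}
\]
with multipliers $\mu_{j}\geq 0$ for the nonnegativity constraints. Stationarity at $w^{*}$ yields $2(Uw^{*})_{i}=\lambda+\mu_{i}$ for every $i$, and complementary slackness gives $\mu_{i}w_{i}^{*}=0$. Because $\mu_{i}\geq 0$, this already delivers $(Uw^{*})_{i}\geq \lambda/2$ uniformly in $i$, with equality whenever $w_{i}^{*}>0$.

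To identify $\lambda/2$ with $\norm{\phi^{*}}_{\mathcal{H}_{k}^{d}}^{2}$, I would multiply the stationarity equation by $w_{i}^{*}$ and sum over $i$: the $\mu_{i}$-terms vanish by complementary slackness, and $\sum_{i}w_{i}^{*}=1$ leaves $2(w^{*})^{T}Uw^{*}=\lambda$. Substituting back gives $(Uw^{*})_{i}\geq \lambda/2=(w^{*})^{T}Uw^{*}=\norm{\phi^{*}}_{\mathcal{H}_{k}^{d}}^{2}$, which is exactly the asserted inequality.

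I do not expect a real obstacle. The only point worth checking is that KKT is legitimately applicable here: all constraints are affine so LICQ (equivalently, Slater's condition in the convex setting) holds on $\simplex_{K}$, a minimizer exists by continuity on a compact set, and convexity of $w\mapsto w^{T}Uw$ makes KKT both necessary and sufficient. Everything else is routine linear algebra, and the argument does not depend on whether $U$ is strictly positive definite, which matters since the $\phi_{i}^{*}$ need not be linearly independent.
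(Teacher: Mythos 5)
Your proof is correct. It reaches the same underlying fact as the paper --- the first-order optimality condition $(w^{*})^{T}U(u-w^{*})\geq 0$ for all $u\in\simplex_{K}$, specialized to $u=e_{i}$ --- but by a different formal route. The paper argues directly and elementarily: it perturbs $w^{*}$ along the segment toward an arbitrary $u\in\simplex_{K}$, expands $\omega^{T}U\omega$ for $\omega=\epsilon u+(1-\epsilon)w^{*}$, and observes that the linear-in-$\epsilon$ term must be nonnegative, which yields $(w^{*})^{T}Uw^{*}\leq (w^{*})^{T}Uu$ without invoking any optimization machinery. You instead write down the KKT system, read off $2(Uw^{*})_{i}=\lambda+\mu_{i}$ with $\mu_{i}\geq 0$, and identify $\lambda=2(w^{*})^{T}Uw^{*}$ via complementary slackness; this buys the additional structural information that $\left\langle \phi^{*},\phi_{i}^{*}\right\rangle_{\mathcal{H}_{k}^{d}}=\norm{\phi^{*}}_{\mathcal{H}_{k}^{d}}^{2}$ holds with \emph{equality} whenever $w_{i}^{*}>0$, at the cost of needing to justify that KKT is necessary (which is automatic here since all constraints are affine --- your appeal to LICQ/Slater is slightly loose, as the two are not equivalent, but nothing hinges on it). Both arguments correctly use that the Gram matrix $U$ is positive semidefinite only where needed, and neither requires the $\phi_{i}^{*}$ to be linearly independent.
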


\textit{Proof.} For arbitrary ${\epsilon\in[0,1]}$ and $u\in\simplex_{K}$, then $ \omega := \epsilon u + (1-\epsilon)w^{*} \in \simplex_{K}$, we thus have the following inequality:
\begin{align*}
 \hspace{1cm}{w^{*}}^{T}Uw^{*} & \leq \omega^TU\omega \nonumber\\
 & = {(\epsilon u + (1-\epsilon)w^{*})}^T U (\epsilon u + (1-\epsilon)w^{*}) \\
  & =  {(w^*+\epsilon(u-w^*)}^TU{(w^*+\epsilon(u-w^*)} \\
  & = {w^{*}}^{T}Uw^{*} + 2\epsilon {w^*}^TU(u-{w^*}) + \epsilon^2 (u-{w^*})^T U (u-{w^*}),
\end{align*}
which is equivalent to
\begin{equation}
  0 \leq  2\epsilon {w^*}^TU(u-{w^*}) + \epsilon^2 (u-{w^*})^T U (u-{w^*}).   \label{eq:quadratic}
\end{equation}
Hence ${w^*}^TU(u-{w^*})\geq0$, since otherwise the R.H.S of inequality~(\ref{eq:quadratic}) will be negative with  sufficiently small $\epsilon$. By that, we arrive at
\begin{align*}
{w^*}^TU{w^*} \leq {w^*}^TUu.
\end{align*}
By choosing $u$ to be a one hot vector at $i$, we obtain the conclusion of Lemma~\ref{lem:common_direction}.

\subsection{Derivations for the matrix $U_{ij}$'s formulation in Equation~(\ref{eq:Uij})}

We have
\begin{align*}
\phi_{i}^{*}\left(\cdot\right) & = \mathbb{E}_{\theta\sim q}\left[k\left(\theta,\cdot\right)\nabla\log p_{i}\left(\theta\right)+\nabla k\left(\theta,\cdot\right)\right], \\
\phi_{j}^{*}\left(\cdot\right) & = \mathbb{E}_{\theta'\sim q}\left[k\left(\theta',\cdot\right)\nabla\log p_{j}\left(\theta'\right)+\nabla k\left(\theta',\cdot\right)\right].
\end{align*}
Therefore, we find that
\begin{align*}
U_{ij} & =\left\langle \phi_{i}^{*},\phi_{j}^{*}\right\rangle _{\mathcal{H}_{k}^{d}}=\mathbb{E}_{\theta,\theta'\sim q}\Biggl[\left\langle k\left(\theta,\cdot\right),k\left(\theta',\cdot\right)\right\rangle _{\mathcal{H}_{k}}\sum_{l=1}^{d}\nabla_{\theta_{l}}\log p_{i}(\theta)\nabla_{\theta_{l}'}\log p_{j}(\theta')\\
 & +\sum_{l=1}^{d}\nabla_{\theta_{l}}\log p_{i}(\theta)\left\langle k\left(\theta,\cdot\right),\nabla_{\theta_{l}'}k\left(\theta',\cdot\right)\right\rangle _{\mathcal{H}_{k}}+\sum_{l=1}^{d}\nabla_{\theta_{l}'}\log p_{j}(\theta')\left\langle k\left(\theta',\cdot\right),\nabla_{\theta_{l}}k\left(\theta,\cdot\right)\right\rangle _{\mathcal{H}_{k}}\\
 & +\sum_{l=1}^{d}\left\langle \nabla_{\theta_{l}}k\left(\theta,\cdot\right),\nabla_{\theta_{l}'}k\left(\theta',\cdot\right)\right\rangle _{\mathcal{H}_{k}}\Biggl],
\end{align*}
which is equivalent to
\begin{align*}
U_{ij} & =\mathbb{E}_{\theta,\theta'\sim q}\Biggl[k\left(\theta,\theta'\right)\left\langle \nabla\log p_{i}(\theta),\nabla\log p_{j}(\theta')\right\rangle \\
 & +\left\langle \nabla\log p_{i}(\theta),\frac{\partial k(\theta,\theta')}{\partial\theta'}\right\rangle +\left\langle \nabla\log p_{j}(\theta'),\frac{\partial k(\theta,\theta')}{\partial\theta}\right\rangle +\\
 & +\sum_{l=1}^{d}\left\langle \nabla_{\theta_{l}}k\left(\theta,\cdot\right),\nabla_{\theta_{l}'}k\left(\theta',\cdot\right)\right\rangle _{\mathcal{H}_{k}}\Biggl].
\end{align*}
Now, note that 
\[
\left\langle k\left(\theta,.\right),\varphi\left(.\right)\right\rangle _{\mathcal{H}_{k}}=\varphi\left(\theta\right),
\]
hence we gain
\[
\left\langle \nabla_{\theta_{l}}k\left(\theta,.\right),\varphi\left(.\right)\right\rangle _{\mathcal{H}_{k}}=\nabla_{\theta_{l}}\varphi\left(\theta\right),
\]
which follows that 
\[
\left\langle \nabla_{\theta_{l}}k\left(\theta,\cdot\right),\nabla_{\theta_{l}'}k\left(\theta',\cdot\right)\right\rangle _{\mathcal{H}_{k}}=\nabla_{\theta_{l},\theta'_{l}}^{2}k\left(\theta,\theta'\right),
\]
\[
\sum_{l=1}^{d}\left\langle \nabla_{\theta_{l}}k\left(\theta,\cdot\right),\nabla_{\theta_{l}'}k\left(\theta',\cdot\right)\right\rangle _{\mathcal{H}_{k}}=\sum_{l=1}^{d}\nabla_{\theta_{l},\theta'_{l}}^{2}k\left(\theta,\theta'\right)=\trace\left(\frac{\partial^{2}k(\theta,\theta')}{\partial\theta\partial\theta'}\right).
\]
Putting these results together, we obtain that
\begin{align*}
U_{ij} & =\left\langle \phi_{i}^{*},\phi_{j}^{*}\right\rangle _{\mathcal{H}_{k}^{d}}=\mathbb{E}_{\theta,\theta'\sim q}\Biggl[k\left(\theta,\theta'\right)\left\langle \nabla\log p_{i}(\theta),\nabla\log p_{j}(\theta')\right\rangle \nonumber \\
 & +\left\langle \nabla\log p_{i}(\theta),\frac{\partial k(\theta,\theta')}{\partial\theta'}\right\rangle +\left\langle \nabla\log p_{j}(\theta'),\frac{\partial k(\theta,\theta')}{\partial\theta}\right\rangle +\trace\left(\frac{\partial^{2}k(\theta,\theta')}{\partial\theta\partial\theta'}\right)\Biggr].
\end{align*}
As a consequence, we obtain the conclusion of Equation~(\ref{eq:Uij}).

\subsection{Proof of Theorem \ref{thm:KL_reduce} \label{subsec:KL_reduce}}
Before proving this theorem, let us re-state it:

\begin{thm}
 $w\in\simplex_{K}$
such that $\sum_{i=1}^{K}w_{i}\phi_{i}^{*}=0$, given a sufficiently
small step size $\epsilon$, all KL divergences w.r.t. the target
distributions are strictly decreased by at least $A\norm{\phi^{*}}_{\mathcal{H}_{k}^{d}}^{2}>0$
where $A$ is a positive constant.
\end{thm}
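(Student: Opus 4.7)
The plan is to combine the first-order Taylor expansion of $D_{KL}(q^{[T]}\Vert p_i)$ developed in Section \ref{subsec:theory} with the common-direction inequality of Lemma \ref{lem:common_direction}, and then choose the step size small enough that the linear-in-$\epsilon$ decrease dominates the quadratic remainder uniformly across all $K$ targets.

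First, I would substitute $\phi=\phi^*$ into the Taylor expansion and use $\nabla_\epsilon D_{KL}(q^{[T]}\Vert p_i)\big|_{\epsilon=0}=-\langle \phi^*,\phi_i^*\rangle_{\mathcal{H}_k^d}$ (recalling that $\phi_i^*=\psi_i$) to obtain
\[
D_{KL}(q\Vert p_i)-D_{KL}(q^{[T]}\Vert p_i)=\epsilon\langle \phi^*,\phi_i^*\rangle_{\mathcal{H}_k^d}-O(\epsilon^2).
\]
Lemma \ref{lem:common_direction} then supplies $\langle \phi^*,\phi_i^*\rangle_{\mathcal{H}_k^d}\geq \norm{\phi^*}_{\mathcal{H}_k^d}^2$ for every $i=1,\ldots,K$, so the per-step decrease of each divergence is bounded below by $\epsilon\norm{\phi^*}_{\mathcal{H}_k^d}^2-O(\epsilon^2)$.

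Next, I would argue $\norm{\phi^*}_{\mathcal{H}_k^d}^2>0$. Since $w^*\in\simplex_K$ and $\phi^*=\sum_i w_i^*\phi_i^*$, the non-degeneracy hypothesis immediately rules out $\phi^*=0$. Writing the remainder as $C_i\epsilon^2+o(\epsilon^2)$ for constants $C_i$ depending on $q$, $p_i$, and $\phi^*$, and setting $C=\max_i |C_i|$, the bound
\[
\epsilon\norm{\phi^*}_{\mathcal{H}_k^d}^2-C\epsilon^2-o(\epsilon^2)\;\geq\;\tfrac{\epsilon}{2}\norm{\phi^*}_{\mathcal{H}_k^d}^2
\]
holds for all sufficiently small $\epsilon$, which yields the claim with $A=\epsilon/2>0$.

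The main obstacle is controlling the $O(\epsilon^2)$ remainder uniformly in $i$; this needs a single constant $C$ that dominates the second-order terms of all $K$ Taylor expansions simultaneously, so that one threshold on $\epsilon$ suffices for every target. Because there are only finitely many targets and each remainder is continuous in $\epsilon$ near $0$, taking the maximum of the individual constants is enough, and no subtler manipulation beyond Lemma \ref{lem:common_direction} and the non-degeneracy assumption is required.
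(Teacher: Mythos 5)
Your proposal is correct and follows essentially the same route as the paper's proof: first-order Taylor expansion of each $D_{KL}(q^{[T]}\Vert p_i)$, the gradient identity $\nabla_\epsilon D_{KL}(q^{[T]}\Vert p_i)\big|_{\epsilon=0}=-\langle\phi,\psi_i\rangle_{\mathcal{H}_k^d}$, Lemma~\ref{lem:common_direction} to lower-bound the linear term uniformly by $\norm{\phi^*}_{\mathcal{H}_k^d}^2$, and a maximum over the finitely many second-order remainder constants to pick one threshold on $\epsilon$. Your explicit observation that the non-degeneracy hypothesis forces $\phi^*\neq 0$ (hence $\norm{\phi^*}_{\mathcal{H}_k^d}^2>0$) is a small but welcome addition that the paper leaves implicit.
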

\begin{proof}
We have for all $i=1,...,K$ that
\begin{align*}
D_{KL}\left(q^{[T]}\Vert p_{i}\right) & =D_{KL}\left(q\Vert p_{i}\right)+\nabla_{\epsilon}D_{KL}\left(q^{[T]}\Vert p_{i}\right)\Big|_{\epsilon=0}\epsilon+O_{i}\left(\epsilon^{2}\right)\\
 & =D_{KL}\left(q\Vert p_{i}\right)-\left\langle \phi,\psi_{i}\right\rangle _{\mathcal{H}_{k}^{d}}\epsilon+O_{i}\left(\epsilon^{2}\right)\\
 & \leq D_{KL}\left(q\Vert p_{i}\right)-\norm{\phi^{*}}_{\mathcal{H}_{k}^{d}}^{2}\epsilon+O_{i}\left(\epsilon^{2}\right).
\end{align*}

Because $\lim_{\epsilon\goto0}\frac{O_{i}(\epsilon^{2})}{\epsilon^{2}}=B_{i}$,
there exists $\alpha_{i}>0$ such that $|\epsilon|<\alpha_{i}$ implies
$\left|O_{i}(\epsilon^{2})\right|<\frac{3}{2}\left|B_{i}\right|\epsilon^{2}$.
By choosing, $B=\frac{3}{2}\max_{i}\text{\ensuremath{\left|B_{i}\right|}}$
and $\alpha=\min_{i}\alpha_{i}$, we arrive at for all $\epsilon<\alpha$
and all $i$
\[
D_{KL}\left(q^{[T]}\Vert p_{i}\right)<D_{KL}\left(q\Vert p_{i}\right)-\norm{\phi^{*}}_{\mathcal{H}_{k}^{d}}^{2}\epsilon+B\epsilon^{2}.
\]

Finally, by choosing sufficiently small $\epsilon>0$, we reach the
conclusion of the theorem.
\end{proof} 

% \subsection{ Computational Complexity}

% Our algorithms imply negligible time and memory overhead compared to the original MGDA baseline \cite{sener2018multi}

\section{Implementation Details \label{sec:Additional-Experiments}}
In this appendix, we provide implementation details regarding the empirical evaluation in the main paper along with additional comparison experiments.
\subsection{Experiments on Toy Datasets}
\subsubsection{Sampling from Multiple Distribution}
In this experiment, the three target distributions are created as presented in the main paper. The particle's coordinates are randomly sampled from the normal distribution $\mathcal{N}\left(0, 5\right)$. Adam optimizer \cite{kingma2014adam} with learning rate of $3e-2$ and $\beta_1 = 0.9, \beta_2 = 0.999$ is used to update the particles. MOO-SVGD and MT-SGD converged after 2000 and 1000 iterations, respectively.

\begin{figure}[!ht]
\begin{centering}

\includegraphics[width=1\textwidth]{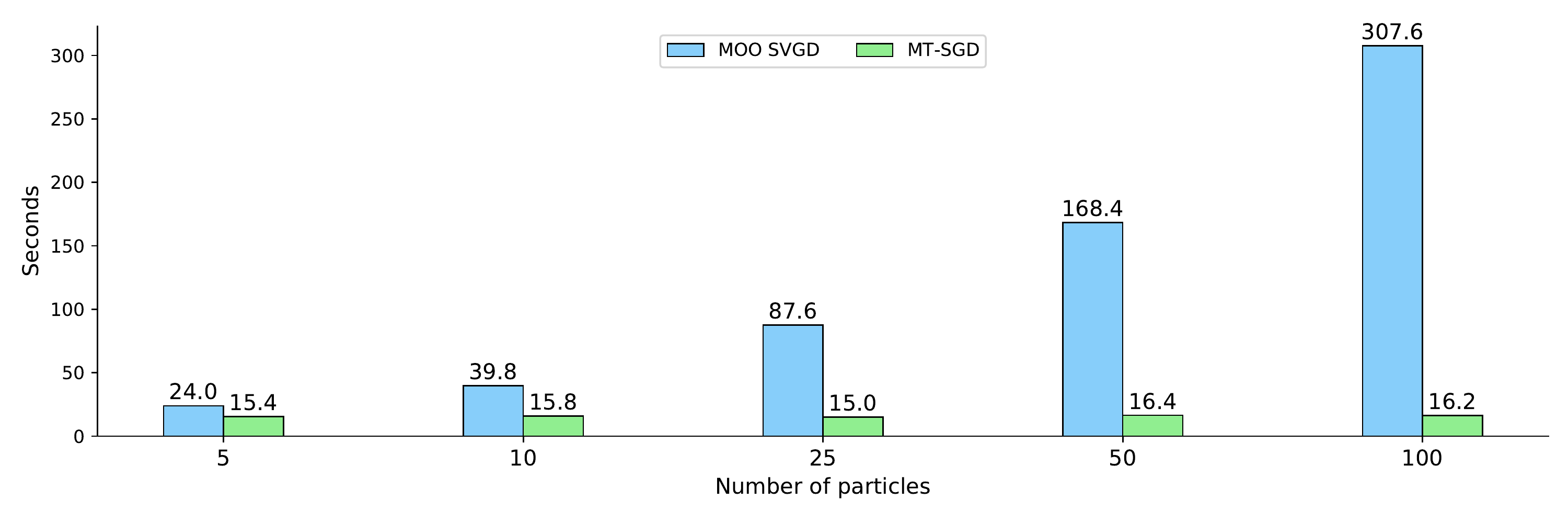}
\par\end{centering}
\caption{Running time of MT-SGD and MOO-SVGD for 1000 steps on: Intel(R) Xeon(R) CPU @ 2.20GHz CPU and
Tesla T4 16GB VRAM GPU. Results are averaged over 5 runs.}
\label{fig:runtime}

\end{figure}

We also measure the running time between MOO-SVGD and our proposed method when varying the number of particles from $5$ to $100$. In Figure \ref{fig:runtime}, we plot the time consumption when running MOO-SVGD and MT-SGD in 1000 iterations. As can be seen that, MOO-SVGD runtime grows linearly with the number of particles, since it requires solving separate quadratic problems (Algorithm \ref{alg:alg}) for each particle. By contrast, there is only one quadratic programming problem solving in our proposed method, which significantly reduces time complexity, especially when the number of particles is high.

\subsubsection{Multi-objective Optimization}

ZDT-3 \cite{zitzler2000comparison} is a classic benchmark problem in multi-objective optimization with 30 variables $\theta = (\theta_1, \theta_2,\dots, \theta_{30})$ with a number of disconnected Pareto-optimal fronts. This problem is given by:
\begin{align*}
    & \min f_1(\theta),  \\
    & \min f_2(\theta) = g(\theta) h(f_1(\theta), g(\theta)),
\end{align*}
where
\begin{align*}
&f_1(\theta) = \theta_1, \\
& g(\theta) = 1 + \frac{9}{29} \sum_{i=2}^{30} \theta_i, \\
& h(f_1, g) = 1 - \sqrt{\frac{f_1}{g}} - \frac{f_1}{g}\sin (10\pi f_1),\\
& 0 \leq \theta_i \leq 1 , i = 1, 2, \dots, 30.
\end{align*}
The Pareto optimal solutions are given by
\begin{align*}
   & 0 \leq \theta_1 \leq 0.0830, \\
   & 0.1822 \leq \theta_1 \leq 0.257,\\
   & 0.4093 \leq \theta_1 \leq 0.4538,\\
   & 0.6183 \leq \theta_1 \leq 0.6525,\\
   & 0.8233 \leq \theta_1 \leq 0.8518,\\
  &  \theta_i=0 \text{ for } i= 2,\dots, 30.
\end{align*}

For ZDT3 experiment, we utilize Adam \cite{kingma2014adam}, learning rate $5e-4$ and update the 50 particles for 10000 iterations as in the comparative baseline \cite{liu2021profiling}.

\subsubsection{Multivariate regression}
We consider the SARCOS regression dataset \cite{vijayakumar2002statistical}, which contains 44,484 training samples and 4,449 testing samples with 21 input variables and 7 outputs (tasks). The train-test split in \cite{navon2021learning} is kept, with 40,036 training
examples, 4,448 validation examples, and 4,449 test examples. We replicate the neural network architecture from \cite{navon2021learning} as follows:
$
 \mathrm{21\times256\,FC \shortrightarrow ReLU}
     \shortrightarrow \mathrm{256\times256\,FC \shortrightarrow ReLU} \shortrightarrow \mathrm{256\times256\,FC \shortrightarrow ReLU} \shortrightarrow \mathrm{256\times7\,FC }
$ (139,015 params). The network is optimized by Adam \cite{kingma2014adam} optimizer for 1000 epochs, with $\beta_1 = 0.9, \beta_2 = 0.999$, and the learning rate of $1e-4$. All experimental results are obtained by running five times with different seeds.

\begin{table}[H]

\caption{Mean square errors of MT-SGD and competing methods on SARCOS dataset  \cite{vijayakumar2002statistical}. We take the best checkpoint in each approach based on the validation score. Results are averaged over 5 runs, and we highlight the best method for each task in \textbf{bold}. \label{tab:sarcos}}
\vspace{1mm}
\centering{}\resizebox{1.0\textwidth}{!}{%%
\begin{tabular}{c|c|l|l|l|l|l|l|l|l}

\toprule
 & Method & Task 1 & Task 2 & Task 3 & Task 4 & Task 5 & Task 6 & Task 7  & Average\\
\midrule
\multirow{3}{*}{ Validation} &  MGDA & 0.025 &  0.2789 & 0.0169 & 0.0026 & 1.158 & 0.264 & 0.005 & 0.25 \\
 & MOO-SVGD & 0.0177 & 0.2182 & 0.0113 & 0.0013 & 1.241 & 0.2292 & 0.0025 & 0.2459 \\
 & MT-SGD &  \textbf{0.0173} & \textbf{0.2124} & \textbf{0.0112} & \textbf{0.0012} & \textbf{1.110} & \textbf{0.2208} & \textbf{0.0024} & \textbf{0.2251} \\
\midrule
\multirow{3}{*}{ Test} &  MGDA & 0.0082 & 0.0675 & 0.0038 & 0.0009 & 0.2635 & 0.0455 & 0.0018 & 0.0559 \\
 & MOO-SVGD & 0.0043 & 0.0586 & 0.0019 & 0.0003 & 0.2584 & 0.0365 & 0.0007 & 0.0515 \\
 & MT-SGD & \textbf{0.0037} & \textbf{0.0515} & \textbf{0.0018} & \textbf{0.0002} & \textbf{0.2097} & \textbf{0.0318} & \textbf{0.0005} & \textbf{0.0428} \\
\bottomrule
\end{tabular}}
\end{table}

Regarding the baselines for this experiment, we compare our method against MGDA \cite{sener2018multi}, MOO-SVGD \cite{liu2021profiling}. We empirically set the batch size as $512$ and $M=5$ particles in MT-SGD and competing methods. The mean square error for each task and the average results are shown in Table \ref{tab:sarcos}. We find that our method achieves the lowest error on all tasks, with the largest gap on Task 4. MT-SGD outperforms the second-best method, MOO-SVGD, with $0.2251$ vs. $0.2459$ on validation set and $0.0428$  vs. $0.0515$ on test set.

\subsection{Experiments on Real Datasets}

\subsubsection{ Experiments on Multi-Fashion+Multi-MNIST Datasets}

We follow the same training protocol with previous work \cite{lin2019pareto, liu2021profiling, sener2018multi}, Lenet \cite{mnist} is trained in 100 epochs with SGD optimizer. The input images are in size $36 \times 36$ and the training batch size is $256$.

\begin{figure}[!ht]
\begin{centering}

\includegraphics[width=.8\textwidth]{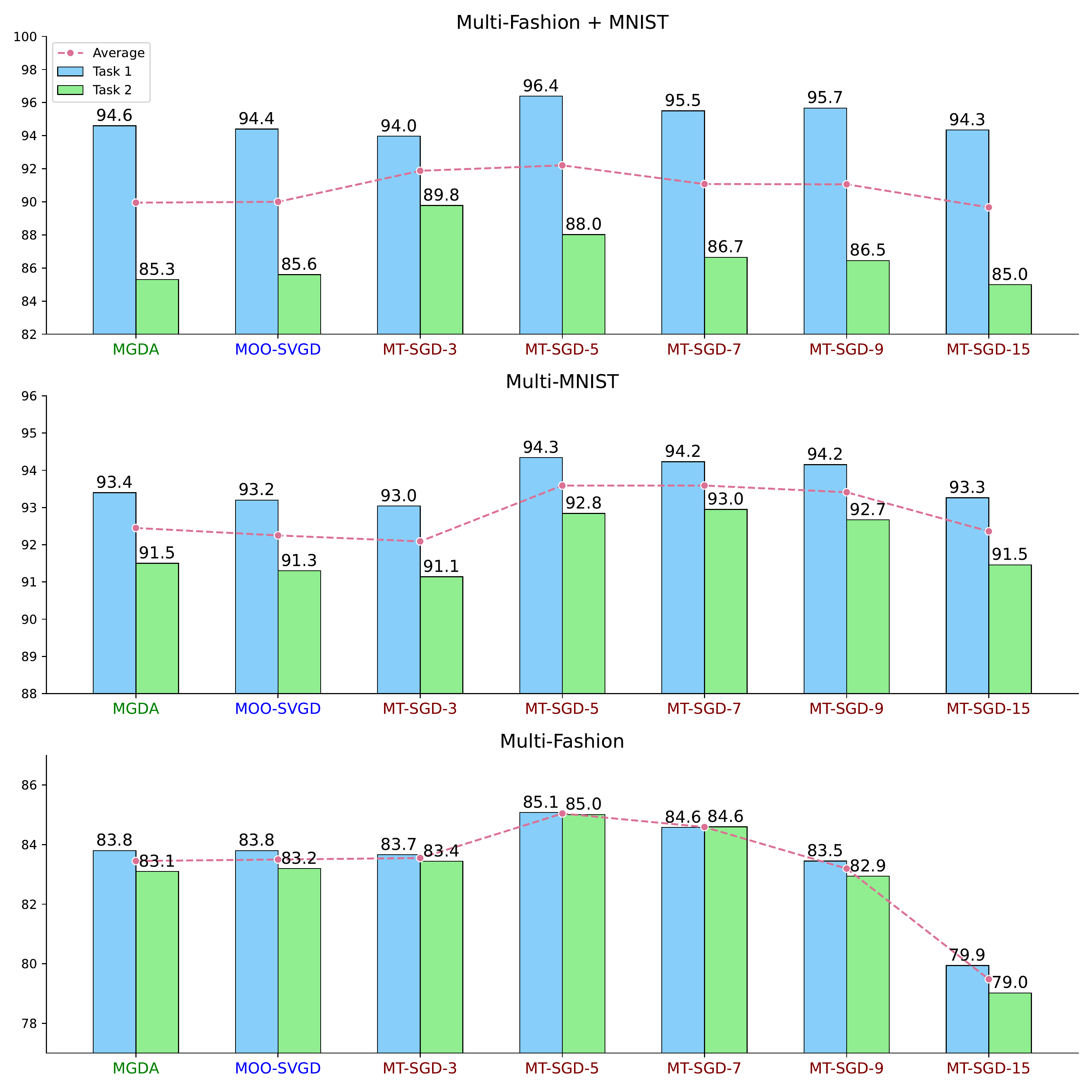}
\par\end{centering}
\caption{Average accuracy (\%) when varying the number of particles from $3$ to $15$. \textcolor{Maroon}{MT-SGD-m} denotes our method using m particle networks.}
\label{fig:vary_particle}

\end{figure}

We now study the performance of our proposed method against variation in the number of particles by conducting more experiments on Multi-MNIST/Fashion/Fashion+MNIST datasets. We vary the number of neural networks in $3,5,7,9, 15$ and present the accuracy scores in Figure \ref{fig:vary_particle}. From the results, a simple conclusion that can be derived is that increasing the number of particle networks from $3 \shortrightarrow 5$ improves the performance in all three datasets, surpassing the other two baselines, while further increasing this hyperparameter does not help.

\textbf{Metrics:} In the main paper, we compare our proposed method against baselines in terms of Brier score and expected calibration error (ECE). We here provide more details on how to calculate these metrics. Assumed that the training dataset $\mathcal{D}$ consists of $N$ i.i.d examples $\mathcal{D} = \{x_n, y_n\}_{n=1}^N$ where $y_n \in \{1,2,\dots,K\}$ denotes corresponding labels for K-class classification problem. Let $p(y = c | x_i)$ be the predicted confidence that $x_i$ belongs to class C.
\begin{itemize}
    \item \textbf{Brier score:} The Brier score is computed as the squared error between a predicted probability $p(y|x_i)$ and the one-hot vector ground truth:
    \[BS = \frac{1}{N} \sum_{i=1}^N \sum_{c=1}^K \Big( \mathbf{1}_{y_i=c} - p(y=c|x_i)  \Big)^2\]
    \item \textbf{Expected calibration error:}  Partitioning predictions into $M$ equally-spaced bins $B_m = \Big(\frac{m-1}{M}, \frac{m}{M}\Big] (m=1,2\dots, M)$, the expected calibration error is computed as the average gap between the accuracy and the predicted confidence within each bin:
    \[\mathrm{ECE}=\sum_{m=1}^M \frac{|B_m|}{N}|\mathrm{acc}(B_m)-\mathrm{conf}(B_m)|\]
    where $\mathrm{acc}(B_m)$ and $\mathrm{conf}(B_m)$ denote the accuracy and confidence of bin $B_m$
\end{itemize}

\begin{figure}[!ht]
\begin{centering}

\includegraphics[width=1.\textwidth]{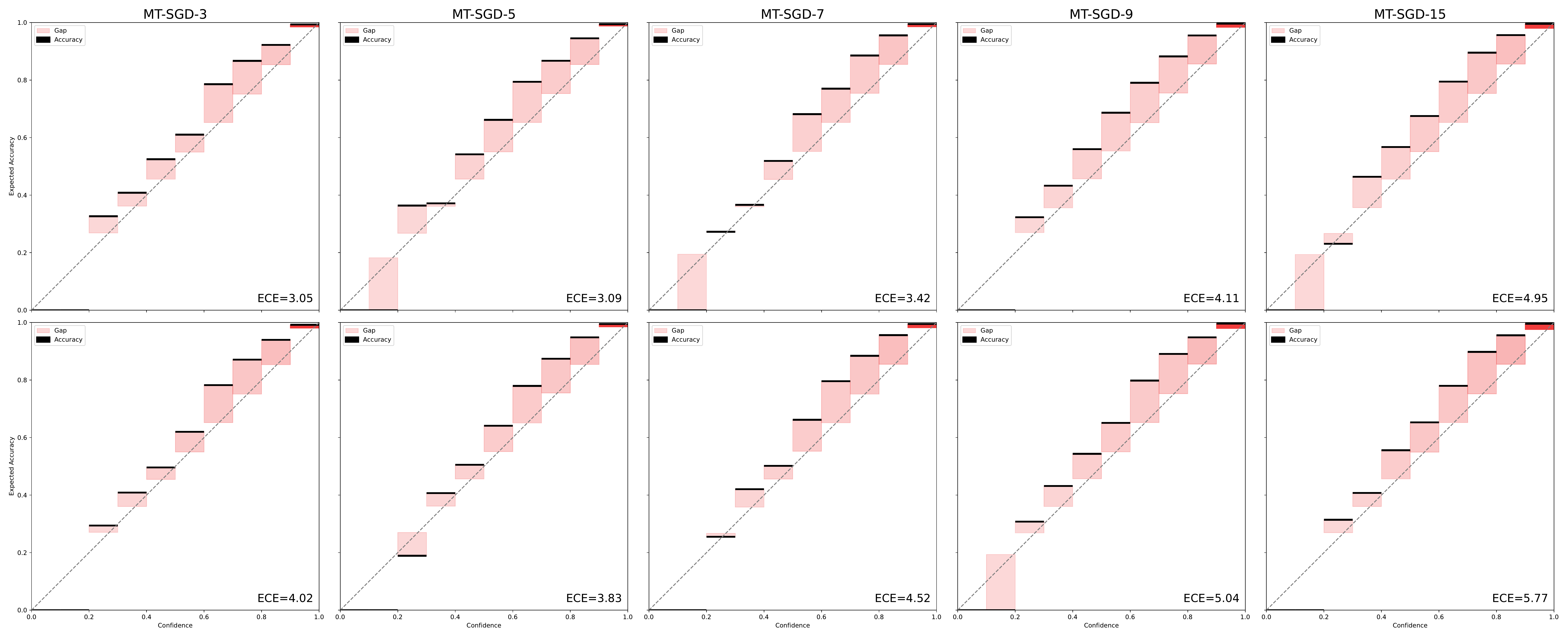}
\par\end{centering}
\caption{Expected Calibration Error (\%) when varying the number of particles from $3$ to $15$ on Multi-MNIST. {MT-SGD-m} denotes our method using m particle networks. We set the number of bins equal to $10$ throughout the experiments.}
\label{fig:ece_particle}

\end{figure}

Figure \ref{fig:ece_particle} displays ECE as a function of the number of particles. Similar to the average accuracy metric, the Expected Calibration Error reduces when we increase the number of particle networks from $3$ to $5$ yet does not decrease in the cases of MT-SGD-7, MT-SGD-9 and MT-SGD-15.

{\textbf{Computational complexity of MT-SGD:}  From the complexity point of view, MT-SGD introduces a marginal computational overhead compared to MGDA since it requires calculating the matrix $U$,  which has a  complexity $O(K^2M^2d)$, where the number of particles $M$  is usually set to a small positive integer. However, on the one hand, computing $U$'s entries can be accelerated in practice by calculating them in parallel since there is no interaction between them during forward pass. On the other hand, the computation of the back-propagation is typically more costly than the forward pass. Thus, the main bottlenecks in our method lie on the backward pass and solving the quadratic programming problem - which is an iterative method \cite{jaggi2013revisiting, sener2018multi}.}

\begin{figure}[!ht]
\begin{centering}

\includegraphics[width=1\textwidth]{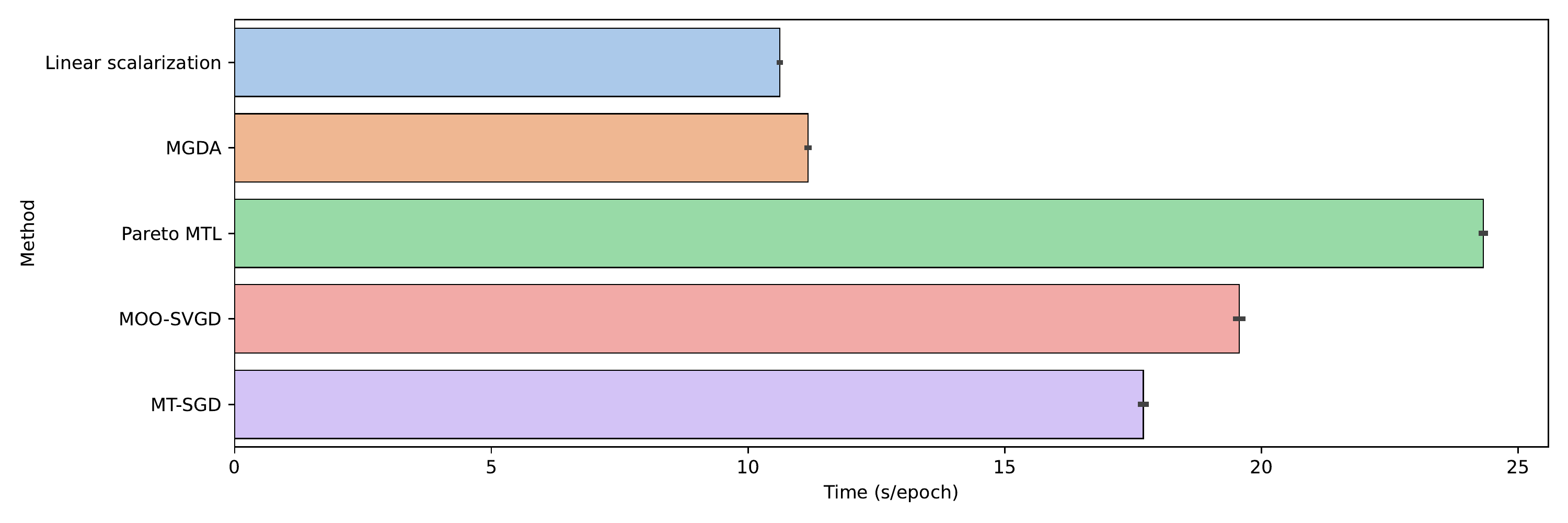}
\par\end{centering}
\caption{Running time on each epoch of MT-SGD and compared baselines on Multi-MNIST dataset. Results are averaged over 5 runs, with the standard deviation reported by error bars.}
\label{fig:runtime_method}

\end{figure}

{As a final remark in the  Multi-Fashion+Multi-MNIST experiment, we compare our methods against baselines in terms of the required  running time in a single epoch and plot the result in Figure \ref{fig:runtime_method}. We observe that in our experiments on Multi-MNIST dataset, the computation time of methods that enforce the diversity of obtained models is higher than that of the methods that do not (Pareto MTL, MOO-SVGD, MT-SGD vs Linear scalarization, MGDA). Nevertheless, this is a small price to pay for the major gain in the ensemble performance, since our training involves the interaction between models to impose diversity. Compared to Pareto MTL and MOO-SVGD, the running time of our proposed MT-SGD is considerably better (~2s less than MOO-SVGD and ~6s less than Pareto MTL).}

\subsubsection{Experiment on CelebA Dataset}

We performed our experiments on the CelebA dataset \cite{liu2015deep}, which contains images annotated with 40 binary attributes. Resnet-18 backbone \cite{he2016deep} without the final layer as a shared encoder and a 2048 x 2 dimensional fully connected layer for each task is employed as in \cite{sener2018multi}. We train this network for 100 epochs with Adam optimizer \cite{kingma2014adam} of learning rate $5e-4$ and batch size $64$. All images are resized to $64\times64\times3$.

Due to space constraints, we report only the abbreviation of each task in the main paper, their full names are presented below.

\begin{table}[!ht]

\caption{CelebA binary classification tasks full names.\label{tab:celeba-name}}
\vspace{1mm}
\centering{}\resizebox{1.0\textwidth}{!}{%%
\begin{tabular}{c|c|c|c|c|c|c|c|c|c}
\toprule
  5S & AE & Att & BUE & Bald & Bangs & BL & BN & BlaH & BloH\\
\midrule
  5 O'clock Shadow & Arched Eyebrows & Attractive & Bags Under Eyes & Bald & Bangs & Big Lips & Big Nose & Black Hair & Blond Hair \\
\end{tabular}}
\end{table}

{Now we investigate the effectiveness of
our proposed MT-SGD method on the whole CelebA dataset, compared with prior work: Uniform scaling: minimizing the uniformly weighted sum of objective functions, Single task: train separate models individually for each task, Uncertainty \cite{kendall2018multi}:  adaptive reweighting with balanced uncertainty, Gradnorm \cite{chen2018gradnorm}: balance the loss functions via gradient magnitude and MGDA \cite{sener2018multi}. The results from previous work are reported in \cite{chen2018gradnorm}. For MGDA, we use their officially released codebase at \url{https://github.com/isl-org/MultiObjectiveOptimization}. For a fair comparison, we run the code with five different random seeds and present the obtained scores in Figure \ref{fig:radar-celeba}.}

{Following \cite{sener2018multi}, we divide 40 target binary attributes into two subgroups: hard and easy tasks for easier visualization. As can be seen from Figure \ref{fig:radar-celeba}, we observe that the naively trained 
Uniform scaling has relatively low performance on many tasks, e.g. ``Mustache", ``Big Lips", ``Oval Face". Compared to other baselines, our proposed method significantly reduces the prediction error in almost all the tasks, especially on ``Goatee", ``Double Chi" and ``No Beard". The detailed result for each target attribute can be found in Table \ref{tab:celeba-full}. }

\begin{figure}[!ht]
\begin{centering}

\includegraphics[width=1.\textwidth]{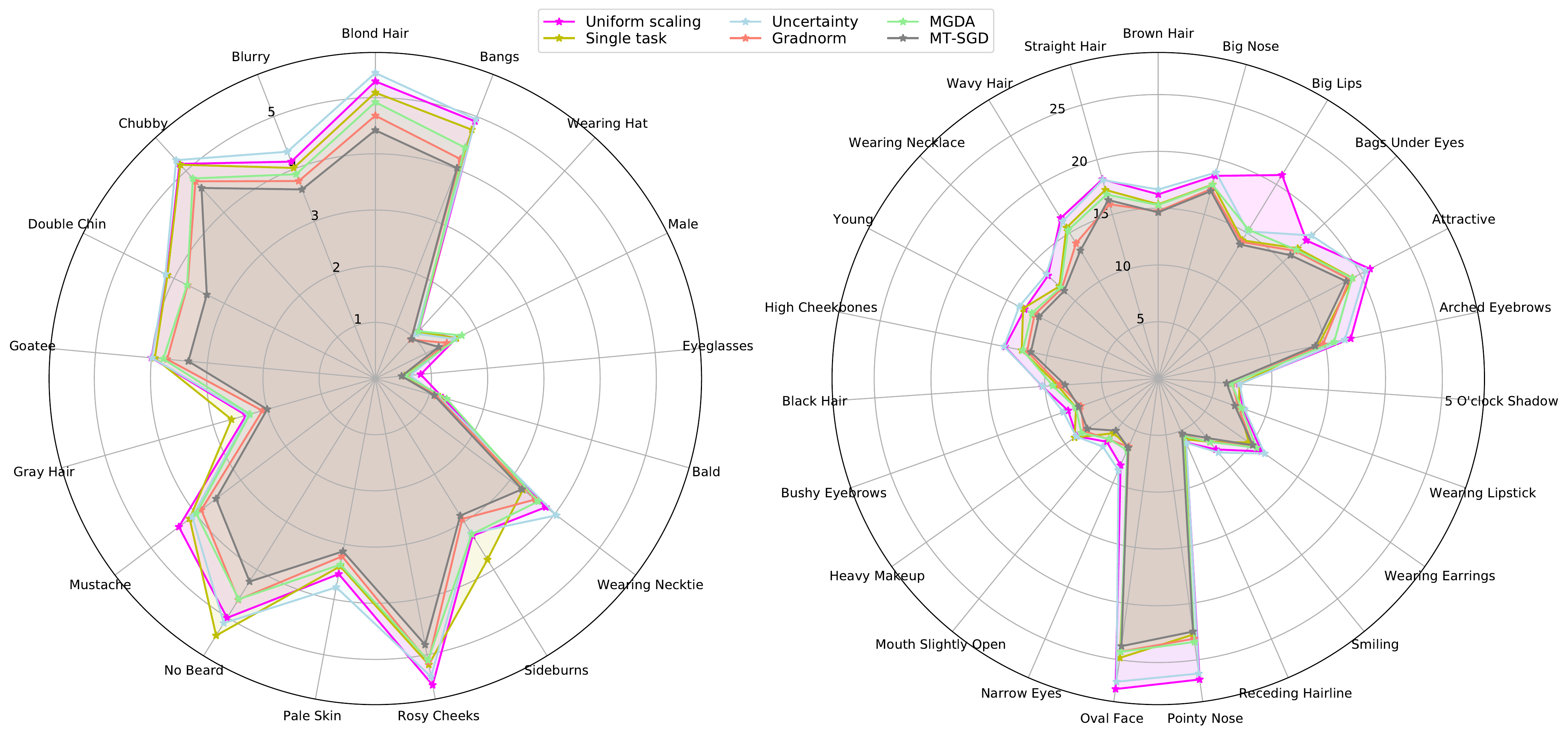}
\par\end{centering}
\caption{Radar charts of prediction error on CelebA \cite{liu2015deep} for each individual binary classification task. 
We divide attributes into two sets: easy tasks on the left, difficult tasks on the right, as in \cite{sener2018multi}.}
\label{fig:radar-celeba}

\end{figure}

\begin{table}[!ht]
\centering
\caption{Average performance (lower is better) of each target attribute for all baselines. We use the \textbf{bold} font to highlight the best-obtained score in each task. \label{tab:celeba-full}}
\resizebox{.95\columnwidth}{!}{%%
\begin{tabular}{l|r|r|r|r|r|r}
\midrule
Attribute & Uniform scaling & Single task & Uncertainty & Gradnorm & MGDA & MT-SGD\\
\midrule
5 O'clock Shadow & 7.11 & 7.16 & 7.18 & 6.54 & 6.47 & \textbf{6.03}\\
Arched Eyebrows & 17.30 & {14.38} & 16.77 & 14.80 & 15.80 & \textbf{14.11}\\
Attractive & 20.99 & 19.25 & 20.56 & {18.97} & 19.21 & \textbf{18.62}\\
Bags Under Eyes & 17.82 & 16.79 & 18.45 & 16.47 & 16.60 & \textbf{15.91}\\
Bald & 1.25 & 1.20 & 1.17 & 1.13 & 1.32 & \textbf{1.09}\\
Bangs & 4.91 & 4.75 & 4.95 & 4.19 & 4.41 & \textbf{4.02}\\
Big Lips & 20.97 & 14.24 & 15.17 & {14.07} & 15.32 & \textbf{13.82}\\
Big Nose & 18.53 & 17.74 & 18.84 & 17.33 & 17.70 & \textbf{17.14}\\
Black Hair & 10.22 & 8.87 & 10.19 & 8.67 & 9.31 & \textbf{8.22}\\
Blond Hair & 5.29 & 5.09 & 5.44 & 4.68 & 4.92 & \textbf{4.42}\\
Blurry & 4.14 & 4.02 & 4.33 & 3.77 & 3.90 & \textbf{3.61}\\
Brown Hair & 16.22 & 15.34 & 16.64 & 14.73 & 15.27 & \textbf{14.63}\\
Bushy Eyebrows & 8.42 & 7.68 & 8.85 & \textbf{7.23} & 7.69 & 7.42\\
Chubby & 5.17 & 5.15 & 5.26 & 4.75 & 4.82 & \textbf{4.59}\\
Double Chin & 4.14 & 4.13 & 4.17 & 3.73 & 3.74 & \textbf{3.35}\\
Eyeglasses & 0.81 & 0.52 & 0.62 & 0.56 & 0.54 & \textbf{0.47}\\
Goatee & 4.00 & 3.94 & 3.99 & 3.72 & 3.79 & \textbf{3.34}\\
Gray Hair & 2.39 & 2.66 & 2.35 & 2.09 & 2.32 & \textbf{2.00}\\
Heavy Makeup & 8.79 & 9.01 & 8.84 & 8.00 & 8.29 & \textbf{7.65}\\
High Cheekbones & 13.78 & 12.27 & 13.86 & 11.79 & 12.18 & \textbf{11.45}\\
Male & 1.61 & 1.61 & 1.58 & 1.42 & 1.72 & \textbf{1.26}\\
Mouth Slightly Open & 7.18 & 6.20 & 7.73 & 6.91 & 6.86 & \textbf{5.91}\\
Mustache & 4.38 & 4.14 & 4.08 & 3.88 & 3.99 & \textbf{3.55}\\
Narrow Eyes & 8.32 & 6.57 & 8.80 & \textbf{6.54} & 6.88 & 6.64\\
No Beard & 5.01 & 5.38 & 5.12 & 4.63 & 4.62 & \textbf{4.25}\\
Oval Face & 27.59 & 24.82 & 26.94 & 24.26 & 24.28 & \textbf{23.78}\\
Pale Skin & 3.54 & 3.40 & 3.78 & 3.22 & 3.37 & \textbf{3.13}\\
Pointy Nose & 26.74 & {22.74} & 26.21 & 23.12 & 23.41 & \textbf{22.48}\\
Receding Hairline & 6.14 & 5.82 & 6.17 & 5.43 & 5.52 & \textbf{5.28}\\
Rosy Cheeks & 5.55 & 5.18 & 5.40 & 5.13 & {5.10} & \textbf{4.82}\\
Sideburns & 3.29 & 3.79 & 3.24 & 2.94 & 3.26 & \textbf{2.87}\\
Smiling & 8.05 & 7.18 & 8.40 & 7.21 & 7.19 & \textbf{6.74}\\
Straight Hair & 18.21 & 17.25 & 18.15 & \textbf{15.93} & 16.82 & 16.32\\
Wavy Hair & 16.53 & 15.55 & 16.19 & 13.93 & 15.28 & \textbf{13.19}\\
Wearing Earrings & 11.12 & \textbf{9.76} & 11.46 & 10.17 & 10.57 & 10.17\\
Wearing Hat & 1.15 & 1.13 & 1.08 & \textbf{0.94} & 1.14 & 0.95\\
Wearing Lipstick & 7.91 & 7.56 & 8.06 & 7.47 & 7.76 & \textbf{7.15}\\
Wearing Necklace & 13.27 & 11.90 & 13.47 & 11.61 & 11.75 & \textbf{11.32}\\
Wearing Necktie & 3.80 & 3.29 & 4.04 & 3.57 & 3.63 & \textbf{3.27}\\
Young & 13.25 & 13.40 & 13.78 & 12.26 & 12.53 & \textbf{11.83}\\
\midrule
Average & 9.62 & 8.77  & 9.53 & 8.44 & 8.73 & \textbf{8.17}\\
\bottomrule
\end{tabular}}
\end{table}

\end{document}